\newtheorem{mydef}{Definition}
\begin{document}

\title{Preserving Differential Privacy in Convolutional Deep Belief Networks
}


\author{NhatHai Phan$^1$\thanks{$^1$ This is a correction version of the previous arXiv:1706.08839 and ML'17 published version. Refer to Appendix A for summary of changes.}        \and
        Xintao Wu \and Dejing Dou 
}

\authorrunning{Accepted by Machine Learning, 2017 (Journal Track of ECML-PKDD)} 

\institute{N. Phan \at
              New Jersey Institute of Technology, USA \\
              Tel.: +1-973-5966367\\
              \email{phan@njit.edu}
           \and
           X. Wu \at
              University of Arkansas, USA \\
              Tel.: +1-479-5756519\\
              \email{xintaowu@uark.edu} 
           \and 
           D. Dou \at
              University of Oregon, USA \\
              Tel.: +1-541-3464572\\
              \email{dou@cs.uoregon.edu} 
}


\maketitle

\begin{abstract}
The remarkable development of deep learning in medicine and healthcare domain presents obvious privacy issues, when deep neural networks are built on users' personal and highly sensitive data, e.g., clinical records, user profiles, biomedical images, etc. However, only a few scientific studies on preserving privacy in deep learning have been conducted. In this paper, we focus on developing a private convolutional deep belief network (pCDBN), which essentially is a convolutional deep belief network (CDBN) under differential privacy. Our main idea of enforcing $\epsilon$-differential privacy is to leverage the functional mechanism to perturb the energy-based objective functions of traditional CDBNs, rather than their results. One key contribution of this work is that we propose the use of Chebyshev expansion to derive the approximate polynomial representation of objective functions. Our theoretical analysis shows that we can further derive the sensitivity and error bounds of the approximate polynomial representation. As a result, preserving differential privacy in CDBNs is feasible. We applied our model in a health social network, i.e., YesiWell data, and in a handwriting digit dataset, i.e., MNIST data, for human behavior prediction, human behavior classification, and handwriting digit recognition tasks. Theoretical analysis and rigorous experimental evaluations show that the pCDBN is highly effective. It significantly outperforms existing solutions. 
\keywords{Deep Learning \and Differential Privacy \and Human Behavior Prediction \and Health Informatics \and Image Classification}
\end{abstract}

\section{Introduction} 
Today, amid rapid adoption of electronic health records and wearables, the global health care systems are systematically collecting longitudinal patient health information, e.g., diagnoses, medication, lab tests, procedures, demography, clinical notes, etc. The patient health information is generated by one or more encounters in any healthcare delivery systems \cite{PMID:26828707}. Healthcare data is now measured in exabytes, and it will reach the zettabyte and the yottabyte range in the near future \cite{Fang:2016:CHI:2911992.2932707}. Although appropriate in a variety of situations, many traditional methods of analysis do not automatically capture complex and hidden features from large-scale and perhaps unlabeled data \cite{citeulike:14040136}. In practice, many health applications depend on including domain knowledge to construct relevant features, some of which are further based on supplemental data. This process is not straightforward and time consuming. That may result in missing opportunities to discover novel patterns and features. 

This is where \textit{deep learning}, which is one of the state-of-the-art machine learning techniques, comes in to take advantage of the potential that large-scale healthcare data holds, especially in the age of digital health. Deep neural networks can discover novel patterns and dependencies in both unlabeled and labeled data by applying state-of-the-art training algorithms, e.g., greedy-layer wise \cite{Hinton2006}, contrastive divergent algorithm \cite{Hinton2002}, etc. That makes it easier to extract useful information when building classifiers and predictors \cite{citeulike:13629676}. 

Deep learning has applications in a number of healthcare areas, e.g., phenotype extraction and health risk prediction \cite{Feiwang:2016}, prediction of the development of various diseases including schizophrenia, a variety of cancers, diabetes, heart failure, etc. \cite{Choiocw112,Li:2015:PIR:2817095.2817104,citeulike:14040136,Roumia2014,citeulike:7685411}, prediction of risk of readmission \cite{citeulike:7685411}, Alzheimer's diagnosis \cite{DBLP:conf/isbi/LiuLCPKF14,doi:10.1142/S0129065716500258}, risk prediction for chronic kidney disease progression \cite{Perotte872}, physical activity prediction \cite{PhanAsonam2015,PhanDPK16,Phan:2015:ODL,Phan0WD16}, feature learning from fMRI data \cite{10.3389/fnins.2014.00229}, diagnosis code assignment \cite{Gottlieb2013,Perotte231}, reconstruction of brain circuits \cite{helmstaedter2013}, prediction of the activity of potential drug molecules \cite{DBLP:MaSLDS15}, the effects of mutations in non-coding DNA on gene expressions \cite{Leung15062014,Xiong1254806}, and many more.

The development of deep learning in the domain of medicine and healthcare presents obvious privacy issues, when deep neural networks are built based on patients' personal and highly sensitive data, e.g., clinical records, user profiles, biomedical images, etc. To convince individuals to allow that their data be included in deep learning projects, principled and rigorous privacy guarantees must be provided. However, only a few deep learning techniques have yet been developed that incorporate privacy protections. In clinical trials, such lack of protection and efficacy may put patient data at high risk and expose healthcare providers to legal action based on HIPAA/HITECH law \cite{HIPAA,HITECH}. Motivated by this, we aim to develop an algorithm to preserve privacy in fundamental deep learning models in this paper.



Releasing sensitive results of statistical analyses and data mining while protecting privacy has been studied in the past few decades. One state-of-the-art privacy model is $\epsilon$-differential privacy \cite{dwork2006calibrating}. A differential privacy model ensures that the adversary cannot infer any information about any particular data record with high confidence (controlled by a privacy budget $\epsilon$) from the released learning models. This strong standard for privacy guarantees is still valid, even if the adversary possesses all the remaining tuples of the sensitive data. The privacy budget $\epsilon$ controls the amount by which the output distributions induced by two neighboring databases may differ. We say that two databases are neighboring if they differ in a single data record, that is, if one data record is present in one database and absent in the other. It is clear that the smaller values of $\epsilon$ enforce a stronger privacy guarantee. This is because it is more difficult to infer any particular data record by distinguishing any two neighboring databases from the output distributions. Differential privacy research has been studied from the theoretical perspective, e.g., \cite{chaudhuri2008privacy,hay2010boosting,DBLP:conf/sigmod/KiferM11,DBLP:conf/kdd/LeeC12}.
Different types of mechanisms (e.g., the Laplace mechanism \cite{dwork2006calibrating}, the smooth sensitivity \cite{nissim2007smooth}, the exponential mechanism \cite{DBLP:conf/focs/McSherryT07}, and the perturbation of objective function \cite{chaudhuri2008privacy}) have been studied to enforce differential privacy.

Combining differential privacy and deep learning, i.e., the two state-of-the-art techniques in privacy preserving and machine learning, is timely and crucial. This is a non-trivial task, and therefore only a few scientific studies have been conducted. In \cite{ShokriVitaly2015}, the authors proposed a distributed training method, which directly injects noise into gradient descents of parameters, to preserve privacy in neural networks. The method is attractive for applications of deep learning on mobile devices. However, it may consume an unnecessarily large portion of the privacy budget to ensure model accuracy, as the number of training epochs and the number of shared parameters among multiple parties are often large. To improve this, based on the composition theorem \cite{Dwork:2009:DPR}, Abadi et al. \cite{Abadi} proposed a privacy accountant, which keeps track of privacy spending and enforces applicable privacy policies. The approach is still dependent on the number of training epochs. With a small privacy budget $\epsilon$, only a small number of epochs can be used to train the model. In practice, that could potentially affect the model utility, when the number of training epochs needs to be large to guarantee the model accuracy. 

Recently, Phan et al. \cite{Phan0WD16} proposed deep private auto-encoders (dPAs), in which differential privacy is enforced by perturbing the objective functions of deep auto-encoders \cite{Bengio2009}. It is worthy to note that the privacy budget consumed by dPAs is independent of the number of training epochs. A different method, named \textbf{CryptoNets}, was proposed in \cite{pmlr-v48-gilad-bachrach16} towards the application of neural networks to encrypted data. A data owner can send their encrypted data to a cloud service that hosts the network, and get encrypted predictions in return. This method is different from our context, since it does not aim at releasing learning models under privacy protections.

Existing differential privacy preserving algorithms in deep learning pose major concerns about their applicability. They are either designed for a specific deep learning model, i.e., deep auto-encoders \cite{Phan0WD16}, or they are affected by the number of training epochs \cite{ShokriVitaly2015,Abadi}. Therefore, there is an urgent demand for the development of a privacy preserving framework, such that: \textbf{(1)} It is totally independent of the number of training epochs in consuming privacy budget; and \textbf{(2)} It has the potential to be applied in typical energy-based deep neural networks. Such frameworks will significantly promote the application of privacy preservation in deep learning. 

Motivated by this, we aim at developing a \textit{private convolutional deep belief network} (pCDBN), which essentially is a convolutional deep belief network (CDBN) \cite{Lee:2009} under differential privacy. CDBN is a typical and well-known deep learning model. It is an energy-based model. Preserving differential privacy in CDBNs is non-trivial, since CDBNs are more complicated compared with other fundamental models, such as auto-encoders and Restricted Boltzmann Machines (RBM) \cite{Smolensky1986}, in terms of structural designs and learning algorithms. In fact, there are multiple groups of hidden units in each of which parameters are shared in a CDBN. Inappropriate analysis might result in consuming too much of a privacy budget in training phases. The privacy consumption also must be independent of the number of training epochs to guarantee the potential to work with large datasets.

Our key idea is to apply Chebyshev Expansion \cite{Chebyshev} to derive polynomial approximations of non-linear objective functions used in CDBNs, such that the design of differential privacy-preserving deep learning is feasible. Then, we inject noise into these polynomial forms, so that the $\epsilon$-differential privacy is satisfied in the training phases of each hidden layer by leveraging \textit{functional mechanism} \cite{zhang2012functional}. Third, hidden layers now become private hidden layers, which can be stacked on each other to produce a \textit{private convolutional deep belief network} (pCDBN). 

To demonstrate the effectiveness of our framework, we applied our model for binomial human behavior prediction and classification tasks in a health social network. A novel human behavior model based on the pCDBN is proposed to predict whether an overweight or obese individual will increase physical exercise in a real health social network. To illustrate the ability to work with large-scale datasets of our model, we also conducted additional experiments on the well-known handwriting digit dataset (MNIST data) \cite{Lecun726791}. We compare our model with the private stochastic gradient descent algorithm, denoted \textbf{pSGD}, from Albadi et al. \cite{Abadi}, and the deep private auto-encoders (\textbf{dPAs}) \cite{Phan0WD16}. The pSGD and dPAs are the state-of-the-art algorithms in preserving differential privacy in deep learning. Theoretical analysis and rigorous experimental evaluations show that our model is highly effective. It significantly outperforms existing solutions. 

The rest of the paper is organized as follows. In Section 2, we introduce preliminaries and related works. We present our private convolutional deep belief network in Section 3. The experimental evaluation is in Section 4, and we conclude the paper in Section 5.

\section{Preliminaries and Related Works} 
In this section, we briefly revisit the definition of differential privacy, functional mechanism \cite{zhang2012functional}, convolutional deep belief networks \cite{Lee:2009}, and the Chebyshev Expansion \cite{Chebyshev}.
Let $D$ be a database that contains $n$ tuples $t_1, t_2, \ldots, t_n$ and
$d$+$1$ attributes $X_1, X_2, \ldots, X_d, Y$. For each tuple $t_i = (x_{i1}, x_{i2}, \ldots, x_{id}, y_i)$, we assume, without loss of generality,
 $\sqrt{\sum^d_{j=1} x^2_{ij}} \leq 1$ where $x_{ij} \geq 0$, $y_i$ follows a binomial distribution.
Our objective is to construct a deep neural network $\rho$ from $D$ that (i) takes $\mathbf{x}_i = (x_{i1}, x_{i2}, \ldots, x_{id})$ as input and (ii) outputs a prediction of $y_i$ that is as accurate as possible. $t_i$ and $\mathbf{x}_i$ are used exchangeably to indicate the data tuple $i$. The model function $\rho$ contains a model parameter vector  $W$. To evaluate whether $W$ leads to an accurate model, a cost function $f_D(W)$ is often used to measure the difference between the original and predicted values of $y_i$. As the released model parameter $W$ may disclose sensitive information of $D$, to protect the privacy, we require that the model training should be performed with an algorithm that satisfies $\epsilon$\textit{-differential privacy}. 

Differential privacy \cite{dwork2006calibrating} establishes a strong standard for privacy guarantees for algorithms, e.g., training algorithms of machine learning models, on aggregate databases. It is defined in the context of neighboring databases. We say that two databases are neighboring if they differ in a single data record. That is, if one data record is present in one database and absent in the other. The definition of differential privacy is as follows: 

\begin{mydef}{($\epsilon$-Different Privacy \cite{dwork2006calibrating}).} A randomized algorithm $A$ fulfills $\epsilon$-differential privacy, iff for any two databases $D$ and $D'$ differing at most one tuple, and for all $O \subseteq Range(A)$, we have:
\begin{equation}
Pr[A(D) = O] \leq e^\epsilon Pr[A(D') = O] 
\end{equation}
where the privacy budget $\epsilon$ controls the amount by which the distributions induced by two neighboring datasets may differ. Smaller values of $\epsilon$ enforce a stronger privacy guarantee of $A$.
\label{Different Privacy} 
\end{mydef}

A general method for computing an approximation to any function $f$ (on $D$) while preserving $\epsilon$-differential privacy is the \textit{Laplace mechanism} \cite{dwork2006calibrating}, where the output of $f$ is a vector of real numbers. In particular, the mechanism exploits the global sensitivity of $f$ over any two neighboring databases (differing at most one record), which is denoted as $GS_{f}(D)$. Given $GS_{f}(D)$, the Laplace mechanism ensures $\epsilon$-differential privacy by injecting noise $\eta$ into each value in the output of $f(D)$ as
\begin{equation}
pdf(\eta) = \frac{\epsilon}{2GS_{f}(D)}exp(-|\eta|\cdot \frac{\epsilon}{GS_{f}(D)})
\end{equation}
where $\eta$ is drawn i.i.d. from Laplace distribution with zero mean and scale $GS_{f}(D)/\epsilon$.

Research in differential privacy has been significantly studied, from both the theoretical perspective, e.g.,
\cite{nipsChaudhuriM08,DBLP:conf/sigmod/KiferM11}, and the application perspective, e.g., data collection \cite{RAPPOR}, data streams \cite{Chan:2012}, stochastic gradient descents \cite{SongCS13}, recommendation \cite{mcsherry2009differentially}, regression \cite{nipsChaudhuriM08}, online learning \cite{JainKT12}, publishing contingency tables \cite{xiao2010differential}, and spectral graph analysis \cite{DBLP:conf/pakdd/WangWW13}.
The mechanisms of achieving differential privacy mainly include the classic approach of adding Laplacian noise \cite{dwork2006calibrating}, the exponential mechanism \cite{McSherry:2007}, and the functional perturbation approach \cite{nipsChaudhuriM08}. 

\subsection{Functional Mechanism Revisited} 
Functional mechanism \cite{zhang2012functional} is an extension of the Laplace mechanism.  It achieves $\epsilon$-differential privacy by perturbing
the objective function $f_D(W)$ and then releasing the model parameter $\overline{W}$ that minimizes the perturbed objective function $\overline{f}_D(W)$ instead of the original one.
The functional mechanism exploits the polynomial representation of $f_D(W)$. The model parameter $W$ is a vector that contains $d$ values $W_1, \ldots, W_d$. Let $\phi(W)$ denote a product of $W_1, \ldots, W_d$, namely, $\phi(W) = W^{c_1}_1 \cdot W^{c_2}_2 \cdot \cdot \cdot W^{c_d}_d$ for some $c_1, \ldots, c_d \in \mathbb{N}$. Let $\Phi_j (j \in \mathbb{N})$ denote the set of all products of $W_1, \ldots, W_d$ with degree $j$, i.e., $\Phi_j = \big\{W^{c_1}_1 \cdot W^{c_2}_2 \cdot \cdot \cdot W^{c_d}_d \Big\vert \sum_{l = 1}^d c_l = j \big\}$. By the Stone-Weierstrass Theorem, any continuous and differentiable $f(t_i, W)$ can always be written as a polynomial of $W_1, \ldots, W_d$, for some $J \in [0, \infty]$, i.e., $f(t_i, W) = \sum_{j = 0}^J\sum_{\phi \in \Phi_j}\lambda_{\phi t_i}\phi(W)$ where $\lambda_{\phi t_i} \in \mathbb{R}$ denotes the coefficient of $\phi(W)$ in the polynomial. Note that $t_i$ and $\mathbf{x}_i$ are used exchangeably to indicate the data tuple $i$.

For instance, the polynomial expression of the loss function in the linear regression is as follows: $f(\mathbf{x}_i, W) = (y_i - \mathbf{x}_i^T W)^2 =  y_i^2 - \sum_{j = 1}^{d}(2y_ix_{ij})W_j + \sum_{1 \leq j,l \leq d}(x_{ij}x_{il})W_j W_l$. 
We can see that it only involves monomials in $\Phi_0 = \{1\}, \Phi_1 = \{W_1, \ldots, W_d\}$, and $\Phi_2 = \{W_i W_j \big\vert i,j \in [1, d]\}$. Each $\phi(W)$ has its own coefficient, e.g., for $W_j$, its polynomial coefficient $\lambda_{\phi_{t_i}} = -2y_ix_{ij}$. Similarly, $f_D(W)$ can also be expressed as a polynomial of $W_1, \ldots, W_d$. 
\begin{equation}
f_D(W) = \sum_{j = 0}^J\sum_{\phi \in \Phi_j}\sum_{t_i \in D}\lambda_{\phi t_i}\phi(W)  
\end{equation} 

\begin{lemma} \cite{zhang2012functional} Let $D$ and $D'$ be any two neighboring datasets. Let $f_D(W)$ and $f_{D'}(W)$ be the objective functions of regression analysis on $D$ and $D'$, respectively. The following inequality holds 
\begin{equation}
\Delta = \sum_{j = 1}^J \sum_{\phi \in \Phi_j}^2\Big\lVert \sum_{t_i \in D} \lambda_{\phi t_i} - \sum_{t'_i \in D'} \lambda_{\phi t'_i} \Big\rVert_1 \leq 2\max_t \sum_{j = 1}^J \sum_{\phi \in \Phi_j} \lVert \lambda_{\phi t} \rVert_1 \nonumber 
\end{equation}
where $t_i, t'_i$ or $t$ is an arbitrary tuple. 
\label{Lemma1}
\end{lemma}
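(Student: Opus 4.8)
The plan is to exploit the definition of neighboring datasets to collapse the difference of the two coefficient sums into a single-tuple difference, and then to finish with the triangle inequality followed by a crude bound by a maximum over tuples.

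First I would write the two datasets explicitly. Since $D$ and $D'$ differ in at most one record, we may assume $D = \{t_1, \ldots, t_{n-1}, t_n\}$ and $D' = \{t_1, \ldots, t_{n-1}, t_n'\}$ for some pair of tuples $t_n, t_n'$ (the variant in which one dataset merely contains an extra record is handled identically, treating the absent record as contributing the zero coefficient vector for every monomial). For each degree $j$ and each monomial $\phi \in \Phi_j$, the coefficients $\lambda_{\phi t_i}$ attached to the shared tuples $t_1, \ldots, t_{n-1}$ occur with the same value in both sums and therefore cancel, leaving $\sum_{t_i \in D} \lambda_{\phi t_i} - \sum_{t_i' \in D'} \lambda_{\phi t_i'} = \lambda_{\phi t_n} - \lambda_{\phi t_n'}$. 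Substituting this identity into the definition of $\Delta$ reduces the left-hand side to $\sum_{j=1}^J \sum_{\phi \in \Phi_j} \lVert \lambda_{\phi t_n} - \lambda_{\phi t_n'} \rVert_1$.

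Next I would apply the triangle inequality for the $\ell_1$ norm monomial-by-monomial, namely $\lVert \lambda_{\phi t_n} - \lambda_{\phi t_n'} \rVert_1 \le \lVert \lambda_{\phi t_n} \rVert_1 + \lVert \lambda_{\phi t_n'} \rVert_1$, and then sum over $j$ and over $\phi \in \Phi_j$. This produces $\Delta \le \sum_{j=1}^J \sum_{\phi \in \Phi_j} \lVert \lambda_{\phi t_n} \rVert_1 + \sum_{j=1}^J \sum_{\phi \in \Phi_j} \lVert \lambda_{\phi t_n'} \rVert_1$. Each of the two resulting sums ranges over the monomial coefficients of a single fixed tuple, so each is at most $\max_t \sum_{j=1}^J \sum_{\phi \in \Phi_j} \lVert \lambda_{\phi t} \rVert_1$; combining the two copies yields the factor $2$ and exactly the claimed bound.

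There is no genuine obstacle in this argument — it is elementary once the neighboring-dataset cancellation is noticed. The only place demanding a little care is the bookkeeping under the ``extra record'' convention for neighboring datasets as opposed to the ``swapped record'' convention: one should observe that a missing tuple contributes the zero vector of coefficients, so the cancellation and the subsequent triangle-inequality step go through unchanged, and in fact the bound becomes tighter there since only one of the two sums survives. I would close by noting that the inequality therefore holds regardless of which convention is adopted, which is precisely why it is stated with the uniform factor $2$.
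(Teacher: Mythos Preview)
Your argument is correct. Note, however, that the paper does not supply its own proof of this lemma: it is quoted verbatim from \cite{zhang2012functional} and simply cited. The cancellation-then-triangle-inequality argument you give is exactly the standard proof from that reference, so there is nothing to contrast.
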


\begin{wrapfigure}{l}{0.45\textwidth}
\vspace{-20pt}
  \begin{center}
    \includegraphics[width=2.3in]{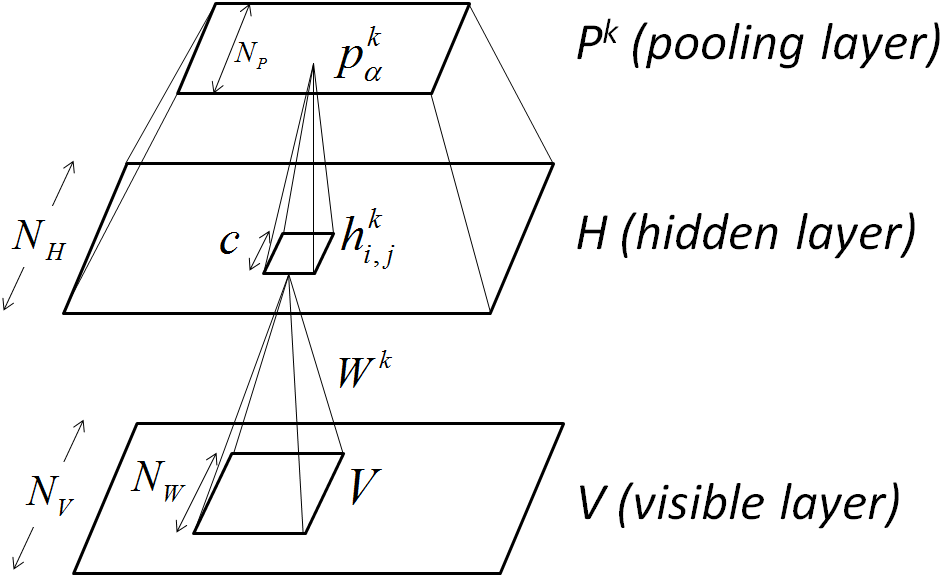}
    \caption{Convolutional Restricted Boltzmann Machine (CRBM).}
    \label{CNN}
\vspace{-40pt}
  \end{center}
\end{wrapfigure}

To achieve $\epsilon$-differential privacy, $f_D(W)$ is perturbed by injecting Laplace noise $Lap(\frac{\Delta}{\epsilon})$ into its polynomial coefficients $\lambda_{\phi}$, and then the model parameter $\overline{W}$ is derived to minimize the perturbed function $\overline{f}_D(W)$, where $\Delta = 2 \max_t \sum_{j = 1}^J \sum_{\phi \in \Phi_j} \lVert \lambda_{\phi t}\rVert_1$, according to the Lemma \ref{Lemma1}.

\subsection{Convolutional Deep Belief Networks}
The basic Convolutional Restricted Boltzmann Machine (CRBM) \cite{Lee:2009} consists of two layers: an input layer $V$ and a hidden layer $H$ (Figure \ref{CNN}). The layer of hidden units consists of $K$ groups, each of which is an $N_H \times N_H$ array of binary units. There are $N_H^2 K$ hidden units in total. Each group (in $K$ groups) is associated with a $N_W \times N_W$ filter, where $N_W = N_V - N_H + 1$. The filter weights are shared across all the hidden units within the group. In addition, each group of hidden units has a bias $b_k$, and all visible units share a single bias $c$. Training a CRBM is to minimize the following energy function $E(\mathsf{v}, \mathsf{h})$ as: 
\begin{multline}
E(\mathsf{v}, \mathsf{h}) = -\sum_{k=1}^K \sum_{i,j = 1}^{N_H} \sum_{r,s = 1}^{N_W} h_{ij}^k W_{rs}^k v_{i+r-1,j+s-1} - \sum_{k=1}^K b_k \sum_{i,j = 1}^{N_H} h_{ij}^k - c\sum_{i,j=1}^{N_V}v_{ij} 
\end{multline}

Gibbs sampling can be applied using the following conditional distributions:
\begin{align}
P(h_{ij}^k = 1| \mathsf{v}) = \sigma\big((\tilde{W}^k * v)_{ij} + b_k\big) \label{GibbH} \\
P(v_{ij} = 1| \mathsf{h}) = \sigma\big((\sum_k W^k * h^k)_{ij} + c\big)
\label{GibbV}
\end{align}
where $\sigma$ is the sigmoid function.

The energy function given the dataset $D$ is as follows: 
\begin{multline}
E(D, W) = -\sum_{t \in D} \sum_{k=1}^K \sum_{i,j = 1}^{N_H} \sum_{r,s = 1}^{N_W} h_{ij}^{k,t} W_{rs}^k v^t_{i+r-1,j+s-1} \\ 
- \sum_{t \in D}\sum_{k=1}^K b_k \sum_{i,j = 1}^{N_H} h_{ij}^{k,t} - c \sum_{t \in D} \sum_{i,j=1}^{N_V}v^t_{ij} 
\label{E(D, W)}
\end{multline}

The max-pooling layer plays the role of a signal filter. By stacking multiple CRBMs on top of each other, we can construct a \textit{convolutional deep belief network} (CDBN) \cite{Lee:2009}. Regarding the softmax layer, we use the cross-entropy error function for a binomial prediction task. Let $Y_T$ be a set of labeled data points used to train the model, the cross-entropy error function is given by 
\begin{equation}
C(Y_T, \theta) = -\sum_{i = 1}^{|Y_T|} \Big( y_i\log \hat{y}_i + (1-y_i)\log (1 - \hat{y}_i) \Big)
\label{cross-entropy error}
\end{equation}
where `$T$' in $Y_T$ is used to denote ``\textbf{t}raining" data. 

We can use the layer-wise unsupervised training algorithm \cite{Bengio07greedylayerwise} and \textit{back-propagation} to train CDBNs.

\subsection{Chebyshev Polynomials}
In principle, many polynomial approximation techniques, e.g., Taylor Expansion, Bernoulli polynomial, Euler polynomial, Fourier series, Discrete Fourier transform, Legendre polynomial, Hermite polynomial, Gegenbauer polynomial, Laguerre polynomial, Jacobi polynomial, and even the state-of-the-art techniques in the 20th century, including spectral methods and Finite Element methods \cite{Harper2012}, can be applied to approximate non-linear energy functions used in CDBNs. However, figuring out an appropriate way to use each of them is non-trivial. First, estimating the lower and upper bounds of the approximation error incurred by applying a particular polynomial in deep neural networks is not straightforward; it is very challenging. It is significant to have a strong guarantee in terms of approximation errors incurred by the use of any approximation approach to ensure model utility in deep neural networks. 
In addition, the approximation error bounds must be independent of the number of data instances to guarantee the ability to be applied in large datasets without consuming excessive privacy budgets.

With these challenging issues, Chebyshev polynomial really stands out. The most important reason behind the usage of Chebyshev polynomial is that the upper and lower bounds of the error incurred by approximating activation functions and energy functions can be estimated and proved, as shown in the next section. Furthermore, these error bounds do not depend on the number of data instances, as we will present in Section 3.4. This is a substantial result when working with complex models, such as deep neural networks on large-scale datasets. In addition, Chebyshev polynomials are well-known, efficient, and widely used in many real-world applications \cite{mason2002chebyshev}. Therefore, we propose to use Chebyshev polynomials in our work to preserve differential privacy in deep convolution belief networks. 

The four kinds of Chebyshev polynomials can be generated from the two-term recursion formula:
\begin{equation}
\text{\ \ \ \ \ \ } T_{k+1}(x) = 2xT_k(x) - T_{k-1}(x), \textit{\ \ \ \ \ \ } T_0(x) = 1
\end{equation}
with different choices of initial values $T_1(x) = x, 2x, 2x-1, 2x+1$.

According to the well-known result \cite{Chebyshev}, if a function $f(x)$ is the Riemann integrable on interval $[-1, 1]$, $f(x)$ can be presented in a Chebyshev polynomial approximation as follows: 
\begin{equation}
f(x) = \sum_{k=0}^{\infty} A_k T_k(x) = A' X\big(T(x)\big) 
\label{Chebyshev}
\end{equation}
where $A_k = \frac{2}{\pi} \int_{-1}^{1} \frac{f(x)T_k(x)}{\sqrt{1 - x^2}}dx$, $k \in \mathbb{N}$, $A' = [\frac{1}{2}A_0, \ldots, A_k, \ldots]$, $T_k(x)$ is Chebyshev polynomial of degree $k$, $X\big(T(x)\big) = [T_0(x) \ldots T_k(x) \ldots]$.

The closed form expression for Chebyshev polynomials of any order is: 
\begin{equation}
T_i(x) = \sum_{j = 0}^{[i/2]} (-1)^j \begin{pmatrix} i \\ 2j \end{pmatrix} x^{i - 2j}(1 - x^2)^j
\label{Chebyshev2} 
\end{equation}
where $[i/2]$ is the integer part of $\frac{i}{2}$.

\section{Private Convolutional Deep Belief Network}
In this section, we formally present our framework (Alg. \ref{Pseudo pCNN}) to develop a convolutional deep belief network under $\epsilon$-differential privacy. Intuitively, the algorithm used to develop dPAs can be applied to CDBNs. However, the main issue is that their approximation technique has been especially designed for cross-entropy error-based objective functions \cite{Bengio2009}. There are many challenging issues in adapting their technique in CDBNs. The cross entropy error-based objective function is very different from the energy-based objective function (Eq. \ref{E(D, W)}). As such: \textbf{(1)} It is difficult to derive its global sensitivity used in the functional mechanism, and \textbf{(2)} It is difficult to identify the approximation error bounds in CDBNs. To achieve private convolutional deep belief networks (pCDBNs), we figure out a new approach of using the Chebyshev Expansion \cite{Chebyshev} to derive polynomial approximations of non-linear energy-based objective functions (Eq. \ref{E(D, W)}), such that differential privacy can be preserved by leveraging the functional mechanism.

Our framework to construct the pCDBN includes four steps (Alg. \ref{Pseudo pCNN}). 
\begin{itemize}
\item First, we derive a polynomial approximation of energy-based function $E(D, W)$ (Eq. \ref{E(D, W)}), using the Chebyshev Expansion. The polynomial approximation is denoted as $\widehat{E}(D, W)$. 

\item Second, the functional mechanism is used to perturb the approximation function $\widehat{E}(D, W)$; the perturbed function is denoted as $\overline{E}(D, W)$. We introduce a new result of \textit{sensitivity} computation for CDBNs. Next, we train the model to obtain the optimal perturbed parameters $\overline{W}$ by using gradient descent. That results in private hidden layers, which are used to produce max-pooling layers. Note that we do not need to enforce differential privacy in max-pooling layers. This is because max-pooling layers play roles as signal filters only. 

\item Third, we stack multiple pairs of a private hidden layer and a max-pooling layer $(H, P)$ on top of each other to construct the private convolutional deep belief network (pCDBN). 

\item Finally, we apply the technique presented in \cite{Phan0WD16} to enforce differential privacy in the softmax layer for prediction and classification tasks.
\end{itemize}

Let us first derive the polynomial approximation form of $E(D,W)$ by applying Chebyshev Expansion, as follows.

\begin{algorithm}[t] 
\SetAlgoNoEnd
\SetKwInOut{Input}{Input}
\SetKwInOut{Output}{Output}
\LinesNumbered
\small{
1: Derive a polynomial approximation of the energy function $E(D, W)$ (Eq. \ref{E(D, W)}), denoted as $\widehat{E}(D, W)$ \\
2: The function $\widehat{E}(D, W)$ is perturbed by using \textit{functional mechanism} (FM) \cite{zhang2012functional}, the perturbed function is denoted as $\overline{E}(D, W)$ \\
3: Stack the private hidden and pooling layers \\
4: By using the technique in \cite{Phan0WD16}, we derive and perturb the polynomial approximation of the softmax layer $\widehat{C}(\theta)$ (Eq. \ref{Softmax}), the perturbed function is denoted as $\overline{C}(\theta)$, Return $\overline{\theta} = \arg \min_\theta \overline{C}(\theta)$\\
}
\caption{\textbf{Private Convolutional Deep Belief Network}}
\label{Pseudo pCNN}
\end{algorithm}

\subsection{Polynomial Approximation of the Energy Function}
There are two challenges in the energy function $E(D, W)$ that prevent us from applying it for private data reconstruction analysis: \textbf{(1)} Gibbs sampling is used to estimate the value of every $h^k_{ij}$; and \textbf{(2)} The probability of every $h^k_{ij}$ equal to 1 is a sigmoid function which is not a polynomial function with parameters $W^k$. Therefore, it is difficult to derive the sensitivity and error bounds of the approximation polynomial representation of the energy function $E(D, W)$. Perturbing Gibbs sampling is challenging. Meanwhile, injecting noise in the results of Gibbs sampling will significantly affect the properties of hidden variables, i.e., values of hidden variables might be out of their original bounds, i.e., $[0, 1]$. 

To address this, we propose to preserve differential privacy in the model before applying Gibbs sampling. The generality is still guaranteed since Gibbs sampling is applied for all hidden units. In addition, we need to derive an effective polynomial approximation of the energy function, so that differential privacy preserving is feasible. First, we propose to consider the probability $P(h_{ij}^k = 1| \mathsf{v}) = \sigma\big((W^k * v)_{ij} + b_k\big)$ instead of $h^k_{ij}$ in the energy function $E(D, W)$. The main goal of minimizing the energy function, i.e., ``\textit{the better the reconstruction of $v$ is, the better the parameters $W$ are,}" remains the same. Therefore, the generality of our proposed approach is still guaranteed. The energy function can be rewritten as follows:
\begin{multline}
\widetilde{E}(D, W) = \sum_{t \in D} \Big[ - \sum_{k=1}^K \sum_{i,j = 1}^{N_H} \sum_{r,s = 1}^{N_W} \sigma\big((W^k * v^t)_{ij} + b_k\big) \times W_{rs}^k v^t_{i+r-1,j+s-1} \\ 
- \sum_{k=1}^K b_k \sum_{i,j = 1}^{N_H} \sigma\big((W^k * v^t)_{ij} + b_k\big)
- c \sum_{i,j=1}^{N_V}v^t_{ij} \Big]
\end{multline}

As the sigmoid function $\sigma(\cdot)$ in neural networks satisfies the Reimann integrable condition \cite{Miroslav:2012}, it can be approximated by the Chebyshev series. We propose to derive a Chebyshev polynomial approximation function for the $\sigma\big((W^k * v^t)_{ij} + b_k\big)$ that results in a polynomial approximation function for our energy function $\widetilde{E}(\cdot)$. To make our sigmoid function satisfy the Riemann integrable condition on $[-1, 1]$, we rewrite it as: $\sigma\big(\frac{(W^k * v^t)_{ij} + b_k}{Z^k_{ij}}\big)$ where $Z^k_{ij}$ is a local response normalization (LRN) term which can be computed as follows: $Z^k_{ij} = \max\Big(\big|(W^k * v^t)_{ij} + b_k\big|, \Big[q + \alpha \sum_{m = \max (0, k-l/2)}^{\min (K-1, k + l/2)} \big((W^m * v^t)_{ij} + b_m\big)^2 \Big]^\beta \Big)$, where the constants $q, l, \alpha,$ and $\beta$ are hyper-parameters, $K$ is the total number of feature maps. As in \cite{krizhevsky2012imagenet}, we used $q = 2, l = 5, \alpha = 10^{-4},$ and $\beta = 0.75$ in our experiments.


From Eq. \ref{Chebyshev}, the Chebyshev polynomial approximation of our sigmoid function is as follows:
\begin{equation}
\sigma\big(\frac{(W^k * v^t)_{ij} + b_k}{Z^k_{ij}}\big) = \sum_{l = 0}^{\infty} A_lT_l(\frac{(W^k * v^t)_{ij} + b_k}{Z^k_{ij}})
\label{Chebyshev sigmoid}
\end{equation}
where $A_l$ and $T_l$ can be computed using Eqs. \ref{Chebyshev} and \ref{Chebyshev2}.

Now, there is still a challenge that prevents us from applying the functional mechanism to preserve differential privacy in applying Eq. \ref{Chebyshev sigmoid}: The equation involves an infinite summation. To address this problem, we remove all orders greater than $L$. Based on the Chebyshev series, the polynomial approximation of the energy function $\widetilde{E}(\cdot)$ can be written as:
\begin{multline}
\widehat{E}(D, W) = \sum_{t \in D} \Big[ -\sum_{k=1}^K \sum_{i,j = 1}^{N_H} \sum_{r,s = 1}^{N_W} \Big(\sum_{l = 0}^{L} A_lT_l(\frac{(W^k * v^t)_{ij} + b_k}{Z^k_{ij}}) \Big) \times W_{rs}^k v^t_{i+r-1,j+s-1} \\ 
- \sum_{k=1}^K b_k \sum_{i,j = 1}^{N_H} \sum_{l = 0}^{L} A_lT_l(\frac{(W^k * v^t)_{ij} + b_k}{Z^k_{ij}}) - c \sum_{i,j=1}^{N_V}v^t_{ij} \Big]
\end{multline}

$\widehat{E}(\cdot)$ is a polynomial approximation function of the original energy function $E(\cdot)$ in Eq. \ref{E(D, W)}. Furthermore, the term $\sum_{l = 0}^{L} A_lT_l(\frac{(W^k * v^t)_{ij} + b_k}{Z^k_{ij}})$ can be rewritten as: $\sum_{l = 0}^L \alpha_l (\frac{(W^k * v^t)_{ij} + b_k}{Z^k_{ij}})^l$,
where $\alpha$ are the Chebyshev polynomial coefficients. For instance, given $L = 7$, we have $\sum_{l = 0}^{L = 7} A_lT_l(X) = \frac{1}{2^5}(-5X^7 + 21X^5 - 35X^3 + 35X + 16)$, where $X = \frac{(W^k * v^t)_{ij} + b_k}{Z^k_{ij}}$.



\subsection{Perturbation of Objective Functions}
We employ the functional mechanism \cite{zhang2012functional} to perturb the objective function $\widehat{E}(\cdot)$ by injecting Laplace noise into its polynomial coefficients. The hidden layer contains $K$ groups of hidden units. Each group is trained with a local region of input neurons, which will not be merged with each other in the learning process. Therefore, it is not necessary to aggregate sensitivities of the training algorithm in $K$ groups to the sensitivity of the function $\widehat{E}(\cdot)$. Instead, the sensitivity of the function $\widehat{E}(\cdot)$ can be considered the maximal sensitivity given any single group. As a result, the sensitivity of the function $\widehat{E}(\cdot)$ can be computed in the following lemma.

\begin{lemma} Let $D$ and $D'$ be any two neighboring datasets. Let $\widehat E(D, W)$ and $\widehat E(D',W)$ be the objective functions of regression analysis on $D$ and $D'$, respectively. $\alpha$ are Chebyshev polynomial coefficients. The following inequality holds:
\begin{multline}
\Delta \leq 2\max_{t,k} \sum_{i,j =1}^{N_H} \sum_{l = 0}^L |\alpha_l| \Big[ (\frac{\sum_{r,s = 1}^{N_W} v^{t,k}_{ij,rs} + 1}{Z^k_{ij}})^l +
\\ \sum_{r,s = 1}^{N_W}(\frac{\sum_{r',s' = 1}^{N_W} v^{t,k}_{ij,r's'} + 1}{Z^k_{ij}})^l |v^{t,k}_{ij,rs}| \Big] + \sum_{i,j = 1}^{N_V}|v^t_{ij}|
\label{GlobalSensitivity}
\end{multline}
\label{LemmaGlobalSensitivity} 
\end{lemma}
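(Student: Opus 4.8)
The plan is to reduce the claim to Lemma~\ref{Lemma1} (the functional-mechanism sensitivity bound), applied \emph{groupwise}, after rewriting $\widehat{E}(D,W)$ as an explicit polynomial in the trainable parameters and controlling the $\ell_1$-norms of its coefficients. Since $\widehat{E}(D,W)=\sum_{t\in D}[\,\cdot\,]$ decomposes over tuples, Lemma~\ref{Lemma1} gives $\Delta\le 2\max_t\sum_{j\ge1}\sum_{\phi\in\Phi_j}\lVert\lambda_{\phi t}\rVert_1$, so the whole task is to identify the monomials of $\widehat{E}$ in the parameters and sum the absolute values of their per-tuple coefficients.

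First I would fix a group index $k$ and use the monomial form already recorded in the excerpt, $\sum_{l=0}^L A_lT_l(X)=\sum_{l=0}^L\alpha_l X^l$ with $X=\frac{(W^k*v^t)_{ij}+b_k}{Z^k_{ij}}$, substituting $(W^k*v^t)_{ij}=\sum_{r,s=1}^{N_W}W^k_{rs}\,v^{t,k}_{ij,rs}$. Treating $Z^k_{ij}$ as a parameter-independent normalization constant (this is the one modeling assumption I would flag, and it is the reason the LRN term is introduced in the first place), each $X$ is an affine form in the parameters $\{W^k_{rs}\}_{r,s}\cup\{b_k\}$ of group $k$ with ``inputs'' $v^{t,k}_{ij,rs}\ge 0$ and $1$ attached to the bias, so $\lVert\mathrm{coeff}(X)\rVert_1=\tfrac{1}{Z^k_{ij}}\big(\sum_{r,s}v^{t,k}_{ij,rs}+1\big)$; this is exactly where the ``$+1$'' in \eqref{GlobalSensitivity} comes from. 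The workhorse estimate is that the map $P\mapsto\lVert\mathrm{coeff}(P)\rVert_1$ is an algebra norm, so for the affine form $X$ the multinomial theorem gives $\lVert\mathrm{coeff}(X^l)\rVert_1=\big(\tfrac{\sum_{r,s}v^{t,k}_{ij,rs}+1}{Z^k_{ij}}\big)^l$ (all cross terms add in absolute value).

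Next I would propagate this through the two parameter-dependent pieces of $\widehat{E}$. Multiplying $\sum_l\alpha_l X^l$ by the monomial $W^k_{rs}v^t_{i+r-1,j+s-1}$ only relabels monomials injectively and rescales by the scalar $v^{t,k}_{ij,rs}$, so the resulting coefficient $\ell_1$-norm is at most $\sum_l|\alpha_l|\,\lVert\mathrm{coeff}(X^l)\rVert_1\,|v^{t,k}_{ij,rs}|$; summing over $i,j,r,s$ yields the second summand inside the bracket of \eqref{GlobalSensitivity}. Multiplying $\sum_l\alpha_l X^l$ by $b_k$ likewise leaves the coefficient $\ell_1$-norm unchanged (again up to the $|\alpha_l|$ weights), and summing over $i,j$ yields the first summand. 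The third term $-c\sum_{i,j}v^t_{ij}$ is linear in the visible bias $c$ and contributes the additive $\sum_{i,j=1}^{N_V}|v^t_{ij}|$. Collecting everything, $\sum_{j\ge1}\sum_{\phi\in\Phi_j}\lVert\lambda_{\phi t}\rVert_1$ restricted to group $k$ equals the bracketed expression, and Lemma~\ref{Lemma1} supplies the factor $2$. The final step is the group-decoupling argument sketched in the text: because the $K$ groups own disjoint weight blocks that are never merged during learning, the perturbation is applied per group over non-overlapping parameter sets, so the relevant quantity is the worst single group — hence $\max_{t,k}$ rather than a sum over $k$.

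I expect the main obstacle to be bookkeeping rather than depth. The delicate point is that $X$ itself already contains $b_k$, so $X^l$ is a polynomial in $W^k$ \emph{and} $b_k$, and the bias must be counted consistently both as the outer multiplier of $\sum_l\alpha_l X^l$ and as a variable inside $X$; this is what makes the ``$+1$'' land in the right places without being double counted. A second subtlety is justifying that $Z^k_{ij}$ may be held fixed when collecting coefficients — if one insisted on treating it as a function of $W$, $\widehat{E}$ would not be polynomial and Lemma~\ref{Lemma1} could not be invoked verbatim. Finally, the groupwise composition claim should be stated with care: it is effectively parallel composition over the disjoint parameter blocks, and a naive union bound over the $K$ groups would reintroduce the factor $K$ that the bound \eqref{GlobalSensitivity} is specifically designed to avoid.
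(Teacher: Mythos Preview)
Your proposal is correct and follows essentially the same route as the paper: apply Lemma~\ref{Lemma1} to the per-tuple decomposition of $\widehat{E}$, read off the coefficient $\ell_1$-norms of the Chebyshev-expanded energy (the paper does this by the informal device of ``substituting $1$ for $W^k_{rs},b_k,c$'' and then taking absolute values, which is precisely your algebra-norm computation $\lVert\mathrm{coeff}(X^l)\rVert_1=\big((\sum_{r,s}v^{t,k}_{ij,rs}+1)/Z^k_{ij}\big)^l$), and finally invoke the group-decoupling argument to replace $\sum_k$ by $\max_k$. Your treatment is in fact more careful than the paper's in flagging that $Z^k_{ij}$ must be held parameter-independent for the polynomial framework to apply, and in separating the inner $b_k$ inside $X$ from the outer $b_k$ multiplier.
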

\begin{proof}
By replacing $W^t_{rs}$ (i.e., $\forall r, s$), $b_k$, and $c$ in $\widehat{E}(D, W)$ with $1$, we have the function with only polynomial coefficients of $\widehat{E}(D, W)$, denoted $\lambda_{\phi D}$. We have that
\begin{equation}
\lambda_{\phi D} = \sum_{t \in D} \lambda_{\phi t}\nonumber
\end{equation}
where
\begin{multline}
\lambda_{\phi t} = -\sum_{k=1}^K \sum_{i,j = 1}^{N_H} \sum_{r,s = 1}^{N_W} \big(\sum_{l = 0}^{L} \alpha_l (\frac{\sum^{N_W}_{r',s'=1} v^t_{ij,r's'} + 1}{Z^k_{ij}})^l \big) v^t_{ij,rs} \\ - \sum_{k=1}^K b_k \sum_{i,j = 1}^{N_H} \sum_{l = 0}^{L} \alpha_l (\frac{\sum^{N_W}_{r,s=1} v^t_{ij,rs} + 1}{Z^k_{ij}})^l - \sum_{i,j=1}^{N_V}v^t_{ij} \nonumber
\end{multline}

The $\widehat{E}(\cdot)$'s sensitivity can be computed as follows:
\begin{multline}
\Delta = \Big\lVert \sum_{t_i \in D} \lambda_{\phi t_i} - \sum_{t'_i \in D'} \lambda_{\phi t'_i} \Big\rVert_1 \leq 2\max_t \lVert \lambda_{\phi t} \rVert_1 \\
\leq 2\max_t \sum_{k = 1}^{K} \sum_{i,j =1}^{N_H} \sum_{l = 0}^L |\alpha_l| \Big[ (\frac{\sum_{r,s = 1}^{N_W} v^{t,k}_{ij,rs} + 1}{Z^k_{ij}})^l +
\\  \sum_{r,s = 1}^{N_W}(\frac{\sum_{r',s' = 1}^{N_W} v^{t,k}_{ij,r's'} + 1}{Z^k_{ij}})^l |v^{t,k}_{ij,rs}| \Big] + \sum_{i,j = 1}^{N_V}|v^t_{ij}|
\label{semiGlobalSensitivity}
\end{multline}

The current sensitivity is an aggregation of sensitivities from all $K$ groups of hidden units. However, each of them is trained with a local region of input neurons, which will not be merged with the others in the learning process. Therefore, the sensitivity of the function $\widehat{E}(\cdot)$ can be considered the maximal sensitivity given any single group of hidden units in a hidden layer. From Eq. \ref{semiGlobalSensitivity}, the final sensitivity of the function $\widehat{E}(\cdot)$ is as follows:
\begin{multline}
\Delta \leq 2\max_{t,k} \sum_{i,j =1}^{N_H} \sum_{l = 0}^L |\alpha_l| \Big[ (\frac{\sum_{r,s = 1}^{N_W} v^{t,k}_{ij,rs} + 1}{Z^k_{ij}})^l +
\\ \sum_{r,s = 1}^{N_W}(\frac{\sum_{r',s' = 1}^{N_W} v^{t,k}_{ij,r's'} + 1}{Z^k_{ij}})^l |v^{t,k}_{ij,rs}| \Big] + \sum_{i,j = 1}^{N_V}|v^t_{ij}| \nonumber
\end{multline}
Consequently, the Eq. \ref{GlobalSensitivity} holds.
\end{proof}

We use gradient descent to train the perturbed model $\overline{E}(\cdot)$. That results in \textit{private hidden layers}. To construct a private convolutional deep belief network (pCDBN), we stack multiple private hidden layers and max-pooling layers on top of each other. The pooling layers only play the roles of signal filters of the private hidden layers. Therefore, there is no need to enforce privacy in max-pooling layers. 

\subsection{Perturbation of Softmax Layer}
On top of the pCDBN, we add an output layer, which includes a single binomial variable to predict $Y$. The output variable $\hat{y}$ is fully linked to the hidden variables of the highest hidden (pooling) layer, denoted $p_{(o)}$, by weighted connections $W_{(o)}$, where $o$ is the number of hidden (pooling) layers in the CDBNs. We use the logistic function as an activation function of $\hat{y}$, i.e., $\hat{y} = \sigma(W_{(o)} p_{(o)})$. \textit{Cross-entropy error}, which is a typical objective function in deep learning \cite{Bengio2009}, is used as a loss function. The cross-entropy error function has been widely used and applied in real-world applications \cite{Bengio2009}. Therefore, it is critical to preserve differential privacy under the use of the cross-entropy error function. However, other loss functions, e.g., square errors, can be applied in the softmax layer as well. Let $Y_T$ be a set of labeled data points used to train the model, the cross-entropy error function is given by: 
\begin{equation}
C(Y_T, \theta) = -\sum_{i = 1}^{|Y_T|} \Big( y_i\log (1+e^{-W_{(o)} p_{i(o)}})
+ (1-y_i)\log (1+e^{W_{(o)} p_{i(o)}}) \Big)
\label{Softmax} 
\end{equation}

By applying the technique in \cite{Phan0WD16}, based on Taylor Expansion \cite{tagkey1985}, we can derive the polynomial approximation of the cross-entropy error function as follows: 
\begin{equation}
\widehat{C}(Y_T, \theta) = \sum_{i = 1}^{|Y_T|} \sum_{l=1}^{2} \sum_{R = 0}^{2} \frac{f^{(R)}_{l}(0)}{R!}\big(W_{(o)}p_{i(o)}\big)^R
\label{PolyCrossEntropy}
\end{equation}
where 
\begin{equation}
\begin{split}
g_{1}(t_i, W_{(o)}) = W_{(o)}p_{i(o)} & \textit{\ \ \ ,\ \ \ } g_{2}(t_i, W_{(o)}) = W_{(o)}p_{i(o)} \\
f_{1}(z) = y_{i}\log (1 + e^{-z}) & \textit{\ \ \ ,\ \ \ } f_{2}(z) = (1-y_{i})\log (1 + e^{z}) \nonumber
\end{split}
\end{equation}

To preserve the differential privacy, the softmax layer is perturbed by using the \textit{functional mechanism} \cite{Phan0WD16,zhang2012functional}. The sensitivity of the softmax layer, $\Delta_C$, is estimated as $\Delta_C = |p_{(o)}| + \frac{1}{4}| p_{(o)}|^2$ \cite{Phan0WD16}.

\subsection{Approximation Error Bounds}
The following lemma shows how much error our approximation approaches incur. The average error of the approximations is always bounded, as presented in the following lemma:

\begin{lemma}{Approximation Error bounds.} Let $S_L(E) = \lVert E(D, W) - \widehat{E}(D, W) \rVert$, $U_L(E) = \lVert E(D, W) - E^*(D, W)\rVert$ where $E(D, W)$ is the target function, $\widehat{E}(D, W)$ is the approximation function learned by our model, and $E^*(D, W)$ is the best uniform approximation. The lower and upper bounds of the sum square error are as follows:
\begin{equation}
\Big(4 + \frac{4}{\pi^2}\log L\Big)N_H^2 K \times U_L(E) > S_L(E) \geq U_L(E) \geq \frac{\pi}{4} N_H^2 K \vert A_{L + 1}\vert
\label{EBounds}
\end{equation}
\label{Upper bounds} 
\end{lemma}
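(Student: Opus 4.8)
The plan is to reduce the multi‑unit statement to a single classical fact about Chebyshev truncation and then re‑assemble, so the proof has three movements: a decomposition step, the three one‑dimensional estimates, and an aggregation step.

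\textbf{Decomposition.} First I would observe that, by construction of $\widehat E$, the residual $E(D,W)-\widehat E(D,W)$ is built entirely out of the $N_H^2K$ sigmoid factors $\sigma\big(((W^k*v^t)_{ij}+b_k)/Z^k_{ij}\big)$, each one replaced by the degree‑$L$ Chebyshev partial sum $S_L$ of its series; every factor that multiplies a sigmoid ($W^k_{rs}v^t$, $b_k$, $c$) has magnitude at most $1$ under the normalization $\sqrt{\sum_j x_{ij}^2}\le1$, and the LRN rescaling by $Z^k_{ij}$ is exactly what forces each sigmoid argument into $[-1,1]$ so that the Chebyshev theory of Eqs.~\ref{Chebyshev}--\ref{Chebyshev2} applies verbatim and the coefficients $A_l$ are absolute constants not depending on the data (this is the reason the final bound carries no factor of $n=|D|$: the estimate is obtained per data instance from instance‑independent coefficients). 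Hence the residual is, up to bounded multipliers, a sum of $N_H^2K$ copies of the one‑dimensional truncation tail $r_L(x)=\sum_{l>L}A_lT_l(x)$, and it suffices to (i) bound $\lVert r_L\rVert$ for a single sigmoid and (ii) control how the $N_H^2K$ copies aggregate.

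\textbf{The three one‑dimensional estimates.} For a Riemann‑integrable $f=\sigma$ on $[-1,1]$ with expansion $f=\sum_kA_kT_k$ and partial sum $S_Lf=\sum_{k\le L}A_kT_k$: for the \emph{middle} inequality $S_L(E)\ge U_L(E)$, note that after expanding $T_l$ via Eq.~\ref{Chebyshev2} and absorbing the bounded multipliers, $\widehat E$ is a polynomial of degree $L$ in the relevant variables, while $E^*$ is by definition a best uniform degree‑$L$ approximant, so it cannot be beaten — this is immediate once one checks $\widehat E$ really lies in the competing class. For the \emph{upper} inequality I would invoke the classical Lebesgue‑constant bound $\lVert f-S_Lf\rVert\le(1+\Lambda_L)\lVert f-p^*_L\rVert$ for the Fourier--Chebyshev projection, together with the standard estimate $\Lambda_L\le 3+\tfrac{4}{\pi^2}\log L$, which is where the factor $4+\tfrac{4}{\pi^2}\log L$ comes from. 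For the \emph{lower} inequality I would use orthogonality of $\{T_k\}$ under the weight $(1-x^2)^{-1/2}$, with $\lVert T_k\rVert^2$ a fixed constant for $k\ge1$, to get $\lVert r_L\rVert^2=(\text{const})\sum_{l>L}A_l^2\ge(\text{const})\,A_{L+1}^2$, so the truncation tail (and hence, via the middle inequality, the best‑approximation error) is bounded below by a constant multiple of $|A_{L+1}|$, the constant $\tfrac{\pi}{4}$ reflecting the particular norm/weight convention used here.

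\textbf{Aggregation, and the main obstacle.} The one‑dimensional estimates are textbook; the delicate point — where I expect to spend essentially all the effort — is the passage from per‑unit bounds to the aggregate bounds, because best uniform approximation is \emph{not} additive: a priori the $N_H^2K$ per‑unit errors could partially cancel in $\lVert E-E^*\rVert$ or in $\lVert E-\widehat E\rVert$, which would destroy both the $N_H^2K$ multiplicative factor in the lower bound and the clean ratio in the upper bound. The resolution I would pursue exploits the CRBM structure: the $K$ weight‑sharing groups act on disjoint local receptive fields, so their leading‑order residuals live in independent parameter blocks, and within a group the $N_H^2$ units compose $T_{L+1}$ with distinct linear functionals of $W^k$, yielding distinct monomials $W^{c_1}_1\cdots W^{c_d}_d$; choosing the right (product‑weighted) inner product, Pythagoras then gives the sum of per‑unit contributions as a genuine lower bound for $\lVert E-E^*\rVert$ (hence the $\tfrac{\pi}{4}N_H^2K|A_{L+1}|$), while the triangle inequality plus the Lebesgue bound gives the matching $\big(4+\tfrac{4}{\pi^2}\log L\big)N_H^2K\cdot U_L(E)$ from above. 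A secondary technical point, which I would dispatch first so that the chain is well posed, is confirming that $\widehat E$ is bona fide a degree‑$L$ polynomial in the variables over which "uniform approximation" is measured and that the LRN term indeed keeps every sigmoid argument inside $[-1,1]$ uniformly.
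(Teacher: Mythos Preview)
Your high-level strategy matches the paper's: reduce to the classical one-dimensional Chebyshev truncation bounds for a single sigmoid, then aggregate over the $N_H^2K$ hidden units. The one-dimensional inequalities you invoke (Lebesgue-constant upper bound $(4+\tfrac{4}{\pi^2}\log L)$, the $\tfrac{\pi}{4}|A_{L+1}|$ lower bound, and the trivial $S_L\ge U_L$) are exactly the facts the paper imports wholesale from its Chebyshev reference.

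Where you diverge is in the effort you put into the aggregation step. You correctly flag that best uniform approximation is not additive and that per-unit residuals could cancel, and you propose a structural argument (disjoint receptive fields, distinct monomials, a Pythagoras-type orthogonality) to rule this out. The paper does none of this: its entire aggregation argument is the single sentence ``Since there are $N_H^2K$ hidden units in our pCDBN model, we have \ldots'' followed by multiplying each side of the one-dimensional inequalities by $N_H^2K$. No cancellation analysis, no orthogonality, no triangle inequality --- just a direct scaling. So your proposal is considerably more rigorous than the paper's own proof; the ``main obstacle'' you anticipate spending essentially all the effort on is one the paper simply does not engage with. If your goal is to reproduce the paper's argument, you can drop the aggregation machinery and just cite-and-multiply; if your goal is to give a proof that actually justifies the stated inequality, your plan is the right one, though you should be aware that the paper offers no guidance on the orthogonality/independence claims you would need, and the precise norm in which $S_L(E)$ and $U_L(E)$ are measured is never made explicit enough in the paper to pin down whether the $N_H^2K$ factor on the \emph{upper} side (multiplying $U_L(E)$, not $U_L(\sigma)$) is even dimensionally consistent.
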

\begin{proof}
As the well-known results in \cite{Chebyshev}, given a target sigmoid function $\sigma$, a polynomial approximation function $\hat{\sigma}$ learned by the model, and the best uniform approximation of $\sigma$, $S_L(\sigma) = \lVert \sigma - \hat{\sigma} \rVert$, $U_L(\sigma) = \lVert \sigma - \sigma^*\rVert$, we have that: 
\begin{equation}
S_L(\sigma) \geq U_L(\sigma) \geq \frac{\pi}{4} \vert A_{L + 1}\vert 
\end{equation}

Since there are $N_H^2 K$ hidden units in our pCDBN model, we have $S_L(E) \geq U_L(E) \geq \frac{\pi}{4} N_H^2 K \vert A_{L + 1}\vert$. Similarly, in \cite{Chebyshev}, we also have
\begin{equation}
U_L(\sigma) \leq S_L(\sigma) < \Big(4 + \frac{4}{\pi^2}\log L\Big) U_L(\sigma)
\end{equation}

Since there are $N_H^2 K$ hidden units in our pCDBN model, we have 
\begin{equation}
\Big(4 + \frac{4}{\pi^2}\log L\Big)N_H^2 K \times U_L(E) > S_L(E) \geq U_L(E)
\end{equation} 

Therefore, the Eq. \ref{EBounds} holds.
\end{proof}

The approximation error depends on the structure of the energy function $E(D, W)$, i.e., the number of hidden neurons $N^2_H K$ and $\vert A_{L + 1}\vert$, and the number of attributes of the dataset. Lemma \ref{Upper bounds} can be used to determine when it should stop learning the approximation model. For each group of $N_H^2$ hidden units, the upper bound of the sum square error is only $\frac{\pi}{4}N_H^2 \vert A_{L + 1}\vert$, i.e., $\vert A_{L + 1}\vert$ is tiny when $L$ is large enough.

Importantly, Lemmas \ref{LemmaGlobalSensitivity} and \ref{Upper bounds} show that the sensitivity $\Delta$ and the approximation error bounds of the energy-based function are entirely independent of the number of data instances. This sufficiently guarantees that our differential privacy preserving framework can be applied in large datasets without consuming excessive privacy budgets. This is a substantial result when working with complex models, such as deep neural networks on large-scale datasets. It is worth noting that non-linear activation functions, which are continuously differentiable (Stone-Weierstrass Theorem \cite{WalterRudin}) and satisfy the Riemann-integrable condition, can be approximated by using Chebyshev Expansion. Therefore, our framework can be applied given such activation functions as, e.g., tanh, arctan, sigmoid, softsign, sinusoid, sinc, Gaussian, etc. \cite{Activation}. In the experiment section, we will show that our approach leads to accurate results.

Note that the proofs of Lemmas \ref{LemmaGlobalSensitivity} and \ref{Upper bounds} do not depend on the assumption of the data features being non-negative, and that the target follows by a binomial distribution. The proofs are generally applicable for inputs and the target, which are not restricted by any constraint. As shown in the next section, our approach efficiently works with a multi-class classification task on the MNIST dataset \cite{Lecun726791}. The cross-entropy error function is applied in the softmax layer.


\section{Experiments}
To validate our approach, we have conducted an extensive experiment on well-known and large-scale datasets, including a health social network, YesiWell data \cite{Phan0WD16}, and a handwriting digit dataset, MNIST \cite{Lecun726791}. Our task of validation focuses on four key issues: \textbf{(1)} The effectiveness and robustness of our pCDBN model; \textbf{(2)} The effects of our model and hyper-parameter selections, including the use of Chebyshev polynomial, the impact of the polynomial degree $L$, and the effect of probabilities $P(h^k_{ij}=1|v)$ in approximating the energy function; \textbf{(3)} The ability to work on large-scale datasets of our model; and \textbf{(4)} The benefits of being independent of the number of training epochs in consuming privacy budget.

We carry out the validation through three approaches. One is by conducting the human behavior prediction with various settings of data cardinality, privacy budget $\epsilon$, noisy vs. noiseless models, and original vs. approximated models. By this we rigorously examine the effectiveness of our model compared with the state-of-the-art algorithms, i.e., \cite{Phan0WD16,Abadi}. The second approach is to discover gold standards in our model configuration by examining various settings of hyper-parameters. The third approach is to access the benefits of being independent of the number of training epochs in terms of consuming privacy budget of our pCDBN model. In fact, we present the prediction accuracies of our pCDBN and existing algorithms as a function of the number of training epochs.

\subsection{Human Behavior Modeling} 
In this experiment, we have developed a \textit{private convolutional deep belief network} (pCDBN) for human behavior prediction and classification tasks in the YesiWell health social network \cite{Phan0WD16}.

\textbf{Health Social Network Data.} To be able to compare our model with the state-of-the-art deep private auto-encoders for human behavior prediction (dPAH), we use the same dataset used in Phan et al. \cite{Phan0WD16}. Data were collected from Oct 2010 to Aug 2011 as a collaboration between PeaceHealth Laboratories, SK Telecom Americas, and the University of Oregon to record daily physical activities, social activities (text messages, competitions, etc.), biomarkers, and biometric measures (cholesterol, BMI, etc.)  for a group of 254 overweight and obese individuals. Physical activities, including information about the number of walking and running steps, were reported via a mobile device carried by each user. All users enrolled in an online social network, allowing them to friend and communicate with each other. Users' biomarkers and biometric measures were recorded via daily/weekly/monthly medical tests performed at home individually or at our laboratories. 

In total, we consider three groups of attributes: 
\begin{itemize}
\item Behaviors: \#competitions joined, \#exercising days, \#goals set, \#goals achieved, $\sum$(distances), avg(speeds);
\item \#Inbox Messages: Encouragement, Fitness, Followup, Competition, Games, Personal, Study protocol, Progress report, Technique, Social network, Meetups, Goal, Wellness meter, Feedback, Heckling, Explanation, Invitation, Notice, Technical fitness, Physical;
\item Biomarkers and Biometric Measures: Wellness Score, BMI, BMI slope, Wellness Score slope.
\end{itemize}

\textbf{pCDBN for Human Behavior Modeling.} Our starting observation is that a human behavior is the outcome of behavior determinants such as \textit{self-motivation}, \textit{social influences}, and \textit{environmental events}. This observation is rooted in \textit{human agency in social cognitive theory}~\cite{Bandura89humanagency}. In our model, these human behavior determinants are combined together to model human behaviors. Given a tuple $t_i$, $x_{i1}, \ldots, x_{id}$ are the personal attributes and $y_i$ is a binomial parameter that indicates whether a user increases or decreases his/her exercises. To describe the pCDBN model, we adjust the notations $x_{i1}$ and $y_i$ a little bit to denote the temporal dimension, and our social network information. Specifically, $x^t_{u} = \{x_{1u}^t, \ldots, x_{du}^t\}$ is used to denote the $d$ attributes of user $u$ at time point $t$. Meanwhile, $y^t_{u}$ is used to denote the status of the user $u$ at time point $t$. $y^t_{u} = 1$ denotes $u$ increases exercises at time $t$; otherwise $y^t_{u} = 0$.  

In fact, the current behavior at time-stamp $t$ of a user $u$ is conditional on his/her behavior in the past $N$ time-stamps, i.e., $t-N, \ldots, t-1$. To model this effect (i.e., also considered as a form of self-motivation in social cognitive theory \cite{Bandura89humanagency}), we first aggregate his personal attributes in the last $N$ timestamps into a $d \times N$ matrix, which will be considered the visible input $V$. Then, to model self-motivation and social influence, we add an aggregation of his/her attributes and the effects from his/her friends at the current timestamp $t$ into the dynamic biases, i.e., $\hat{b}^t_{k}$ and $\hat{c}^t$, of the hidden and visible units (Eqs. \ref{HCDBN} - \ref{biasV}). The hidden and visible variables at time $t$ are
\begin{align}
& h^k_{i,j, t} = \sigma\big((\tilde{W}^k * v^t)_{ij} + \hat{b}^t_k\big) \label{HCDBN}\\
& v_{i,j, t} = \sigma\big((\sum_k W^k * h^k)^t_{ij} + \hat{c}^t\big) 
\end{align}
where 
\begin{align}
& \hat{b}^t_{k} = b_k + \sum_{e = 1}^d B^k_e  x^t_{eu} + \frac{\eta_k}{|F_u|}\sum_{v \in F_u} \psi_t(v, u) \label{biasH} \\
& \hat{c}^t = c + \sum_{e = 1}^d A_e  x^t_{eu} + \frac{\eta}{|F_u|}\sum_{v \in F_u} \psi_t(v, u) \label{biasV}
\end{align} 
where $\hat{b}^t_{k}$ and $\hat{c}^t$ are dynamic biases, $B^k_{e}$ is a matrix of weights which connects $x^t_{u}$ with hidden variables in the group $k$. $\psi_t(v, u)$ is the probability that $v$ influences $u$ on physical activity at time $t$. $\psi_t(v, u)$ is derived from the TaCPP model \cite{Phan7325206}. $F_u$ is a set of friends of $u$ in the social network. $\eta$ and $\eta_k$ are parameters which present the ability to observe the explicit social influences from neighboring users. 

The model includes two hidden layers. We trained 10 first layer bases, each $4 \times 12$ variables $v$, and 10 second layer bases, each $2 \times 6$. The pooling ratio was $2$ for both layers. In our work, contrastive divergent algorithm \cite{Hinton2002} was used to optimize the energy function, and back-propagation was used to optimize the cross-entropy error function in the softmax layer. The implementations of our models using Tensorflow\footnote{\url{https://www.tensorflow.org}} and Python were made publicly available on GitHub\footnote{\url{https://github.com/haiphanNJIT/PrivateDeepLearning}}. The results and algorithms can be reproduced on either a single workstation or a Hadoop cluster. To examine the effectiveness of our pCDBN, we established two experiments, i.e., prediction and classification, as follows.

\subsubsection{Human Behavior Prediction} 
\textbf{Experimental Setting.} Our pCDBN model is used to predict the statuses of all the users in the next time point $t+1$ given $\mathcal{M}$ past time points $t-\mathcal{M}+1,$...$, t$. The model has been trained on \textit{daily} and \textit{weekly} datasets. Both datasets contain 300 time points, 30 attributes, 254 users, 2,766 messages, 1,383 friend connections, 11,458 competitions, etc. For each dataset, we have, in total, $254$ users $\times 300$ timestamps $= 76,200$ data points. 

The number of previous time intervals $N$ is set to 4. $N$ is used as a time window to generate training samples. For instance, given 10 days of data ($\mathcal{M} = 10$), a time window of 4 days $N = 4$, and $d$ data features, e.g., BMI, \#steps, etc., a single input $V$ will be a $d \times N (= d \times 4)$ matrix. A single input $V$ is considered as a data sample to model human behavior in our prediction model. If we move the window $N$ on 10 days of data, i.e., $\mathcal{M}$, we will have $\mathcal{M} - N + 1$ training samples for each individual, i.e., $10 - 4 + 1 = 7$ in this example. So, we have, in total, $254 (\mathcal{M}$ - $N$ + $1)$ = $254 \times 7 = 1,778$ training samples for every 10 days of data $\mathcal{M}$ to predict whether an individual will increase physical activity in the next day $t + 1$. 

The Chebyshev polynomial approximation degree $L$ and learning rates are set to 7 and $10^{-3}$. To avoid over-fitting, we apply the $L1$-regularization and the dropout technique \cite{srivastava14a}, i.e., the dropout probability is set to 0.5. Regarding $\mathcal{K}$-fold cross-validation or bootstrapping, it is either unnecessary or impractical to apply them in deep learning, and particularly in our study \cite{BengioCrossValidation,ReedLASER14}. This is because: \textbf{(1)} It is too expensive and time consuming to train $\mathcal{K}$ deep neural networks, each of which usually has a large number of parameters, e.g., hundreds of thousands of parameters \cite{BengioCrossValidation}; and \textbf{(2)} 
Bootstrapping is only used to train neural networks when class labels may be missing, objects in the image may not be localized, and in general, the labeling may be subjective, noisy, and incomplete \cite{ReedLASER14}. This is out of the scope of our focus. Our models were trained on a graphic card NVIDIA GTX TITAN X, 12 GB RAM with 3072 CUDA cores.

\textbf{Competitive Models.} We compare our pCDBN with two types of state-of-the-art models, as follows: 
\begin{itemize}
\item [\textbf{a)}] \textbf{Deep learning models for human behavior prediction}, including: (1) The original convolutional deep neural network (\textbf{CDBN}) for human behavior prediction without enforcing differential privacy; (2) The truncated version of the CDBNs, in which the energy function is approximated without injecting noise to preserve differential privacy, denoted \textbf{TCDBN}; and (3) The conditional Restricted Boltzmann Machine, denoted \textbf{SctRBM} \cite{KangLi2014}. None of these models enforces $\epsilon$-differential privacy. 

\item [\textbf{b)}] \textbf{Deep Private Auto-Encoder} (\textbf{dPAH}) \cite{Phan0WD16}, which is the state-of-the-art deep learning model under differential privacy for human behavior prediction. The dPAH model outperforms general methods for regression analysis under $\epsilon$-differential privacy, i.e., functional mechanism \cite{zhang2012functional}, DPME \cite{conf/nips/Lei11}, and filter-priority \cite{cormode2011personal}. Therefore, we only compare our model with the dPAH. 
\end{itemize}


\begin{figure}[t]
\centering
$\begin{array}{c@{\hspace{0.04in}}c@{\hspace{0.08in}}c}
\includegraphics[width=2.375in]{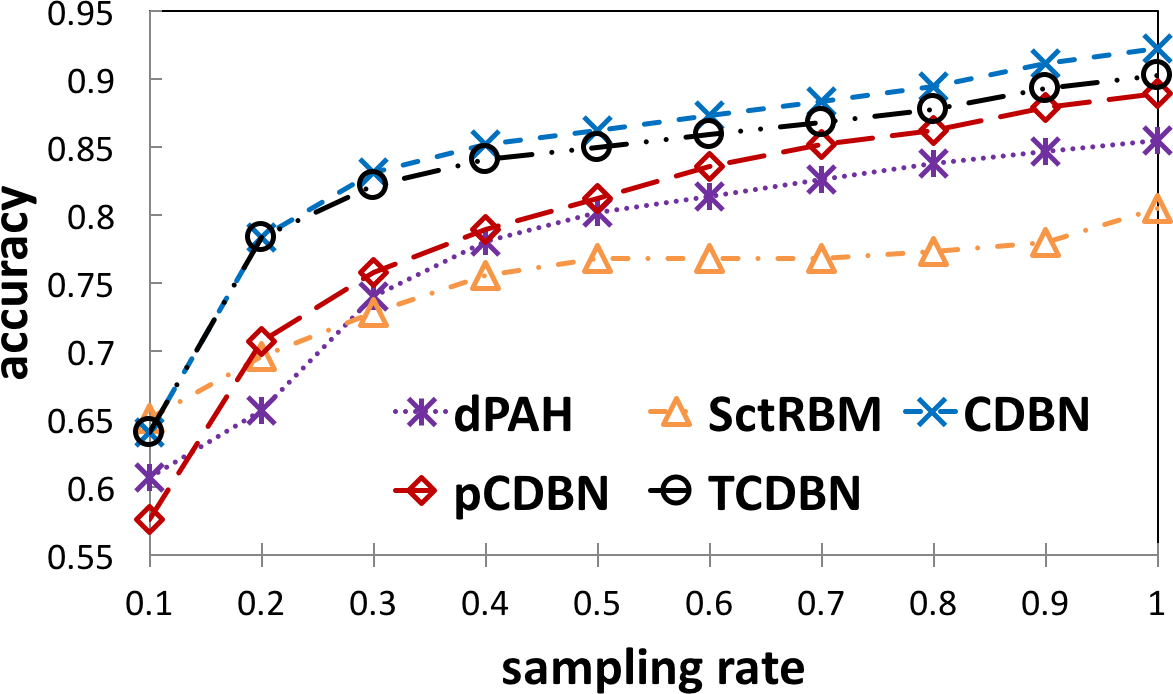} & \includegraphics[width=2.375in]{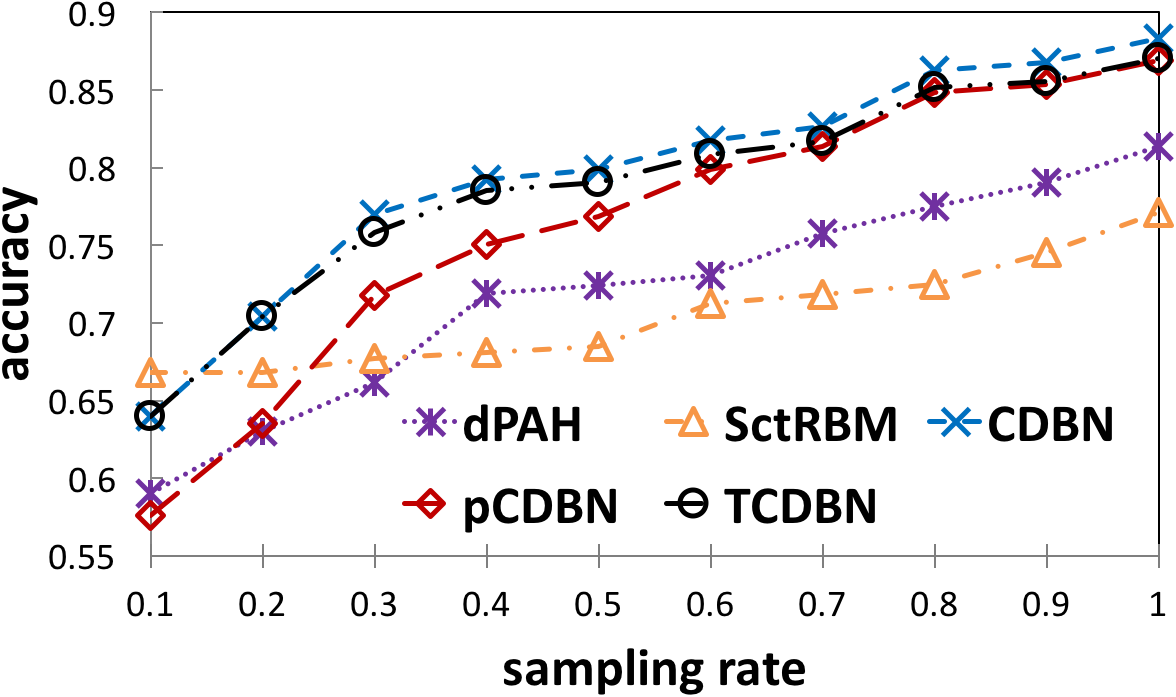} \\ [0.0cm] \mbox{(a) Weekly Dataset} & \mbox{(b) Daily Dataset}
\end{array}$
\caption{Prediction accuracy vs. dataset cardinality.}
\label{cardinality}
\end{figure}
\begin{figure}[!t]
\raggedleft
$\begin{array}{c@{\hspace{0.04in}}c@{\hspace{0.08in}}c}
\includegraphics[width=2.375in]{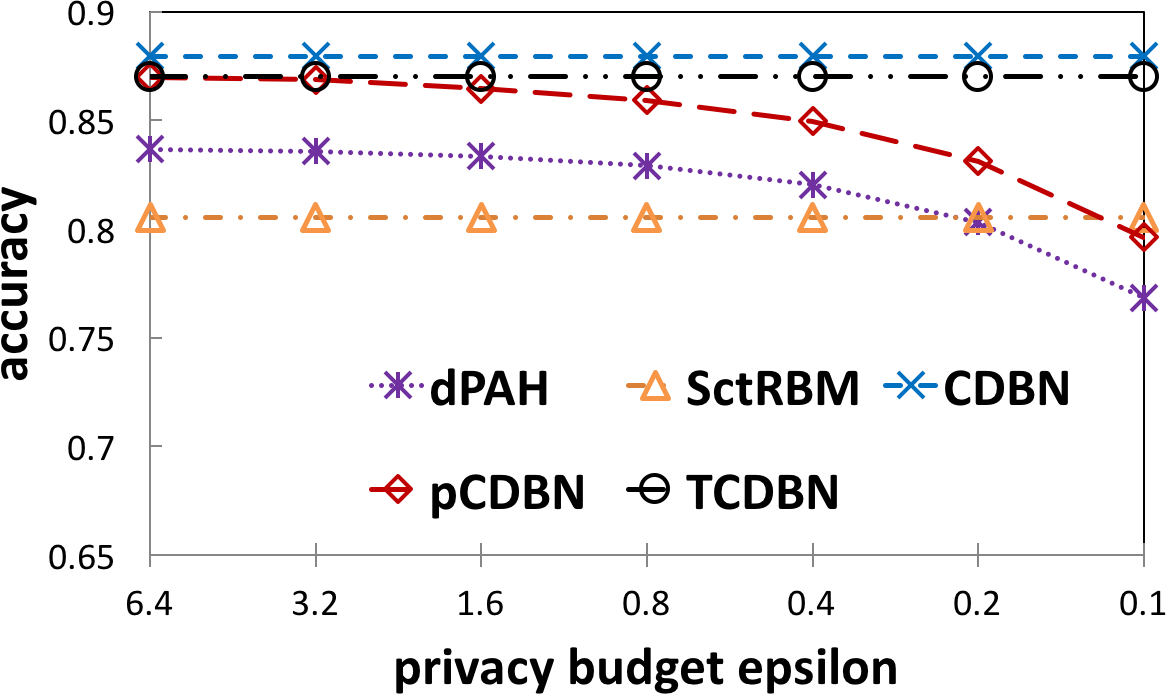} & \includegraphics[width=2.375in]{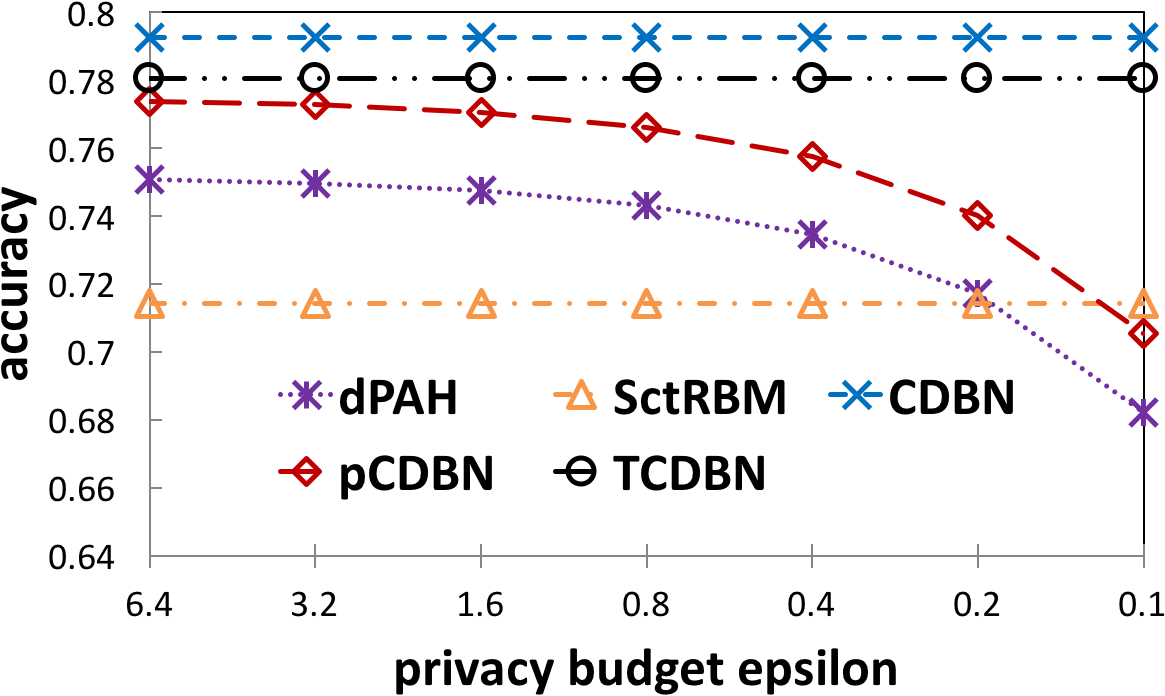} \\ [0.0cm] \mbox{(a) Weekly Dataset} & \mbox{(b) Daily Dataset}
\end{array}$
\caption{Prediction accuracy vs. privacy budget $\epsilon$.} 
\label{epsilon}
\end{figure}

$\bullet$ \textbf{Accuracy vs. Dataset Cardinality.} Fig. \ref{cardinality} shows the prediction accuracy of each algorithm as a function of the dataset cardinality. We vary the size of $\mathcal{M}$, which also can be considered as the sampling rate of the dataset. $\epsilon$ is 1.0 in this experiment. In both datasets, there is a gap between the prediction accuracy of pCDBN and the original convolutional deep belief network (CDBN). However, the gap dramatically gets smaller with the increase of the dataset cardinality ($\mathcal{M}$). In addition, our pCDBN outperforms the state-of-the-art dPAH in most of the cases, and the results are statistically significant ($p = 5.3828e$-$05$, performed by paired t-test). It also is significantly better than the SctRBM when the sampling rate goes just a bit higher, i.e., $> 0.2$ or $> 0.3$ ($p = 8.8350e$-$04$, performed by paired t-test). Either 0.2 or 0.3 is a small sampling rate; thus, this is a remarkable result.

$\bullet$ \textbf{Accuracy vs. Privacy Budget.} Fig. \ref{epsilon} illustrates the prediction accuracy of each model as a function of the privacy budget $\epsilon$. $\mathcal{M}$ is set to 12 $\approx 0.32\%$. The prediction accuracies of privacy non-enforcing models remain unchanged for all $\epsilon$. Since a smaller $\epsilon$ requires a larger amount of noise to be injected, privacy enforcing models incur higher inaccurate prediction results when $\epsilon$ decreases. pCDBN outperforms dPAH in all cases, and the results are statistically significant ($p = 2.7266e$-$12$, performed by paired t-test). In addition, it is relatively robust against the change of $\epsilon$. In fact, the pCDBN model is competitive even with privacy non-enforcing models, i.e., SctRBM.

$\bullet$ \textbf{Probabilities $P(h^k_{ij}=1|v)$ and Gibbs Sampling.} To approximate the energy function $E(D, W)$, we propose to use the probabilities $P(h^k_{ij}=1|v)$ instead of the values of $h^k_{ij}$, which are estimated by applying Gibbs Sampling on the $P(h^k_{ij}=1|v)$. To illustrate the effect of our approach, we conducted both theoretical analysis and experimental evaluations as follows. Let's use $h^k_{ij}$ to estimate the sensitivity $\Delta$ of the energy function $E(D,W)$ (Eq. \ref{E(D, W)}) by following Lemma \ref{Lemma1} as follows:
\begin{multline}
\Delta = 2\max_{t,k} \sum_{i,j =1}^{N_H} \sum_{r,s = 1}^{N_W} \big\lvert h_{ij}^{k,t} v^t_{i+r-1,j+s-1} \big\rvert + \sum_{i,j =1}^{N_H} \big\lvert h_{ij}^{k,t} \big\rvert + \sum_{i,j=1}^{N_V} \big\lvert v^t_{ij} \big\rvert
\label{GlobalSensitivity2}
\end{multline}

There are several issues in the Eq. \ref{GlobalSensitivity2} that prevent us from applying it. First, $h^k_{ij}$ cannot be considered an observed variable, since its value can only be estimated by applying Gibbs sampling from observed variables $v$ and parameters $W$. In other words, the value of $\Delta$ is significantly dependent on Gibbs sampling given $P(h^k_{ij}=1|v)$. Therefore, $\Delta$ can be uncertain in every sampling step. That may lead to a violation of the guarantee of privacy protection under a differential privacy mechanism. To address this issue, one may set all the hidden variables $h^k_{ij}$ to 1. That leads to the use of a maximal value of the sensitivity $\Delta$ as follows:
\begin{equation}
\Delta = 2\max_{t,k} \sum_{i,j =1}^{N_H} \sum_{r,s = 1}^{N_W} \big\lvert v^t_{i+r-1,j+s-1} \big\rvert + N_H^2 + \sum_{i,j=1}^{N_V} \big\lvert v^t_{ij} \big\rvert
\label{MaximalDelta}
\end{equation}

The maximal value of $\Delta$ (Eq. \ref{MaximalDelta}) is huge and is not an optimal bound. In other words, the model efficiency will be affected, since too much noise will be unnecessarily injected into the model.

To tackle this challenge, our solution is to consider the probabilities $P(h^k_{ij}=1|v)$ instead of $h^k_{ij}$. As a result, the sensitivity $\Delta$ in Lemma \ref{LemmaGlobalSensitivity} is only dependent on observed variables $v$ instead of Gibbs samplings. That leads to a smaller amount of noise injected into the model. To demonstrate the effect of this approach, our model is compared with its truncated version, in which the energy function is approximated without injecting noise to preserve differential privacy, denoted \textbf{TCDBN}. Experimental results illustrated in Figs. \ref{cardinality} and \ref{epsilon} show that the impact of our approach on the original model \textbf{CDBN} is marginal in terms of prediction accuracy. On average, the prediction accuracy is only less than 1\% lower compared with the original model. This is a practical result.

\begin{figure*}[t]
\raggedleft
$\begin{array}{c@{\hspace{0.0in}}c@{\hspace{0.0in}}c}
\includegraphics[width=2.45in]{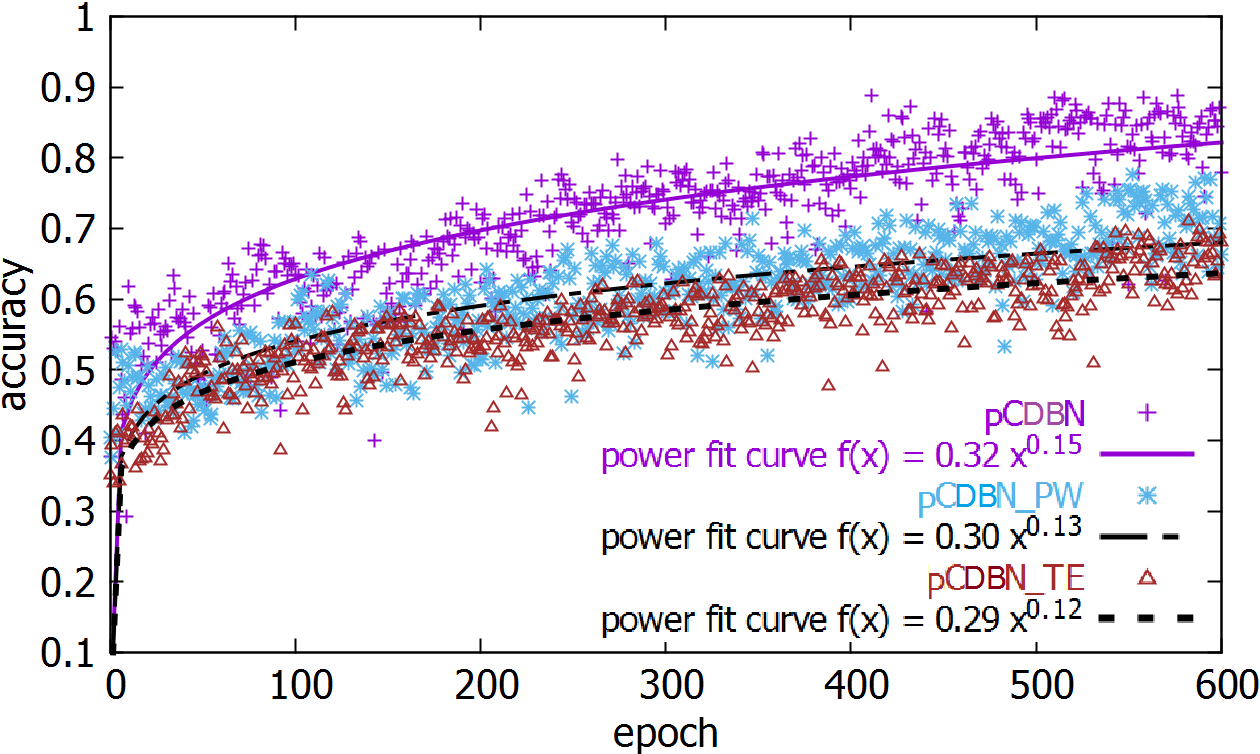} & \includegraphics[width=2.45in]{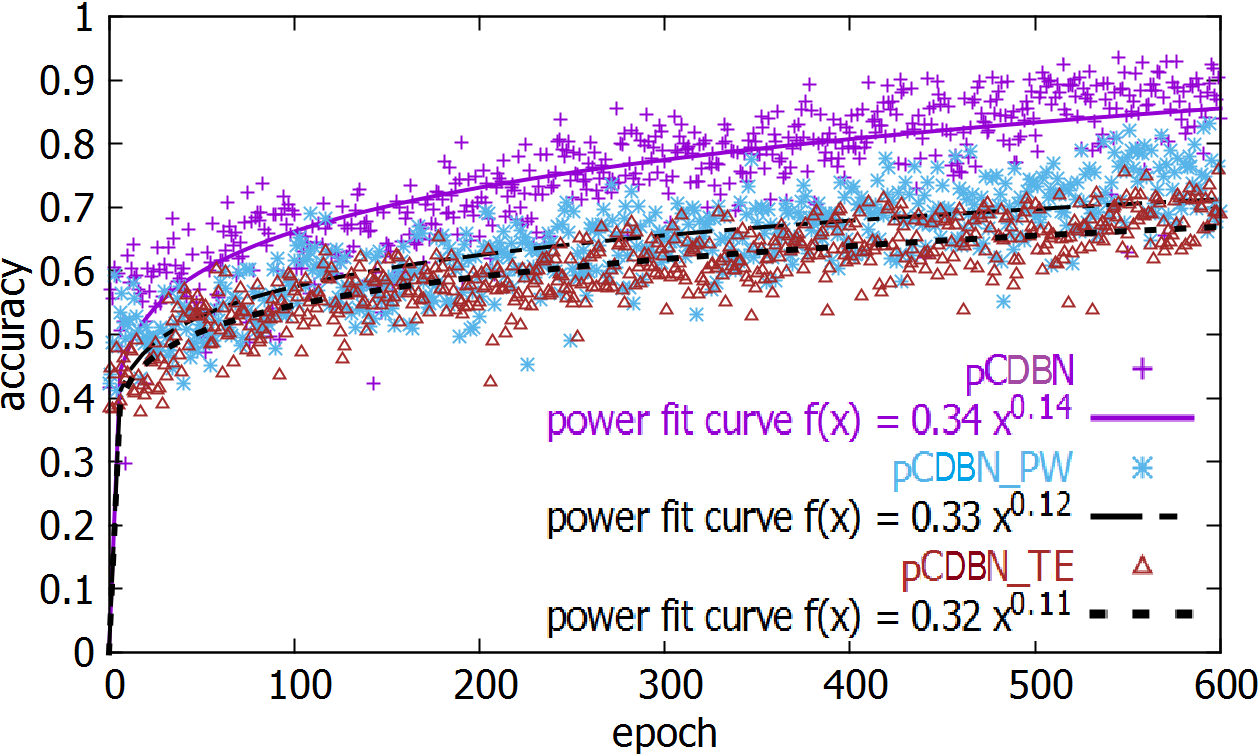} \\ [0.0cm] \mbox{(a) $\epsilon = 0.1$ (Large noise)} & \mbox{(b) $\epsilon = 2$ (Medium noise)} \\ [0.25cm]
\includegraphics[width=2.45in]{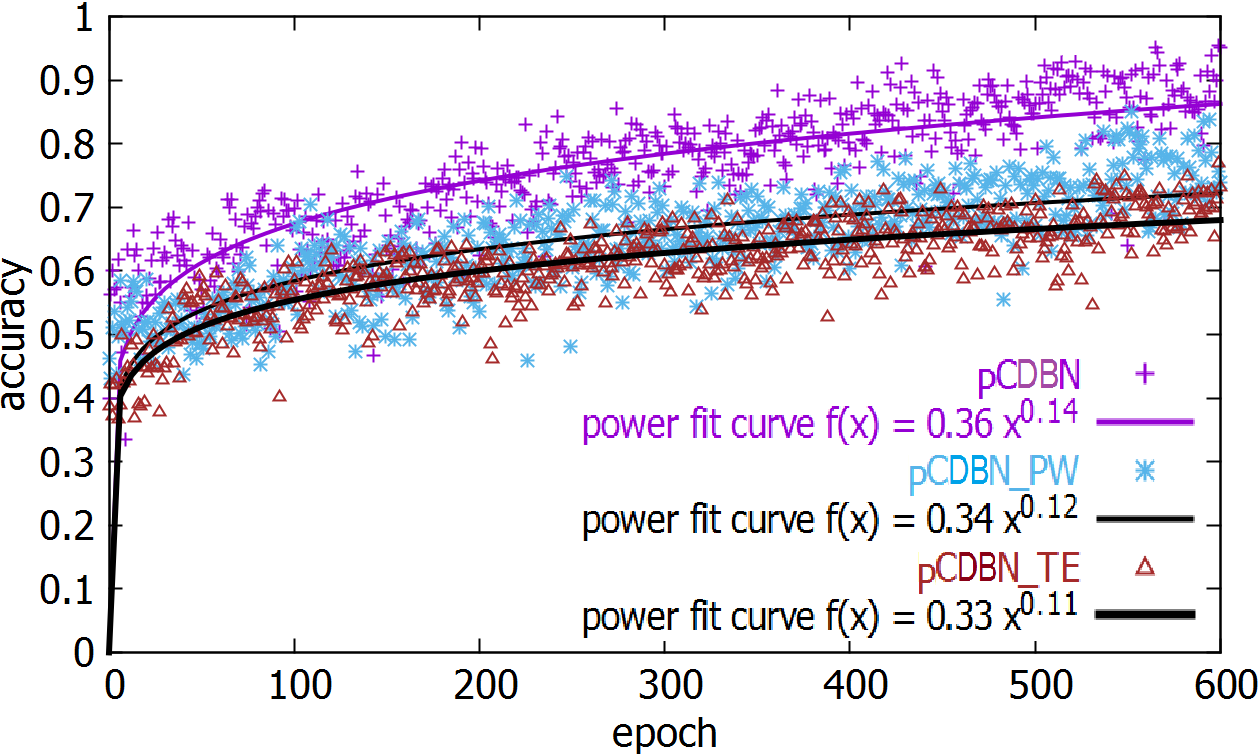} \\ [0.0cm] \mbox{(c) $\epsilon = 8$ (Small noise)}
\end{array}$
\caption{Results of classification accuracy for different noise levels, different approximation approaches, and the number of training epochs.} 
\label{epoch}
\end{figure*}

\subsubsection{Human Behavior Classification} 
In this experiment, we aim to examine: \textbf{(1)} The robustness of our approach when it is trained with a large number of epochs at different noise levels; and \textbf{(2)} The effectiveness of different approximation approaches, including Chebyshev, Taylor, and Piecewise approximations. Our experiment setting is as follows:

We consider every pair $(u, t)$ is a data point. Given $t$ is a week, we have, in total, 9,652 data points (254 users $\times$ 38 weeks). We randomly select 10\% data points as a testing set, and the remaining data points are used as a training set. At each training step, the model is trained with 111 randomly selected data points, i.e., batch size $= 111$. To avoid the imbalance in the data, each training batch consists of a balanced number of data samples from different data classes. With this technique, data points in the under-represented class can be incidentally sampled more than the others \cite{ImbalancedClasses}. The model is used to classify the statuses of all the users given their features. In this experiment, we compare our model with state-of-the-art polynomial approximation approaches in digital implementations, including truncated Taylor Expansion: $\sigma(x) = \tanh x \approx x - \frac{x^3}{3} + \frac{2x^5}{15}$ \cite{DBLPLeeJ98,Miroslav:2012} (\textbf{pCDBN\_TE}), and linear piecewise approximation: $\sigma(x) \approx c_1 x + c_2$ \cite{Armato} (\textbf{pCDBN\_PW}). Other baseline models, i.e., dPAH and SctRBM, cannot be directly applied to this task; so, we do not include them in this experiment.


$\bullet$ Fig. \ref{epoch} shows classification accuracies for different levels of privacy budget $\epsilon$. Each plot illustrates the evolution of the testing accuracy of each algorithm and its power fit curve as a function of the number of epochs. After 600 epochs, our pCDBN can achieve 88\% with $\epsilon = 0.1$, 92\% with $\epsilon = 2$, and 94\% with $\epsilon = 8$. In addition, our model outperforms baseline approaches, i.e., pCDBN\_TE and pCDBN\_PW, and the results are statistically significant ($p = 4.4293e$-$07$, performed by paired t-test). One of the important observations we acquire from this result is that: The Chebyshev polynomial approximation is more effective than the competitive approaches in preserving differential privacy in convolutional deep belief networks. One of the reasons is that Chebyshev polynomial approximation incurs fewer errors than the other two approaches \cite{Harper2012,Miroslav:2012}. Similar to Layer-wise Relevance Propagation \cite{bach-plos15}, the approximation errors will propagate across neural layers. Therefore, the smaller the error, the more accurate the models will be. 

Note that our observations (i.e., data points) in the YesiWell data are not strictly independent. Therefore, the simple use of paired t-test may not give rigorous conclusions. However, the very small p-values under the paired t-test can still indicate the significant improvement of our approach over baselines.


\begin{figure}[t]
\raggedleft
$\begin{array}{c@{\hspace{0.03in}}c@{\hspace{0.03in}}c}
\includegraphics[width=2.4in]{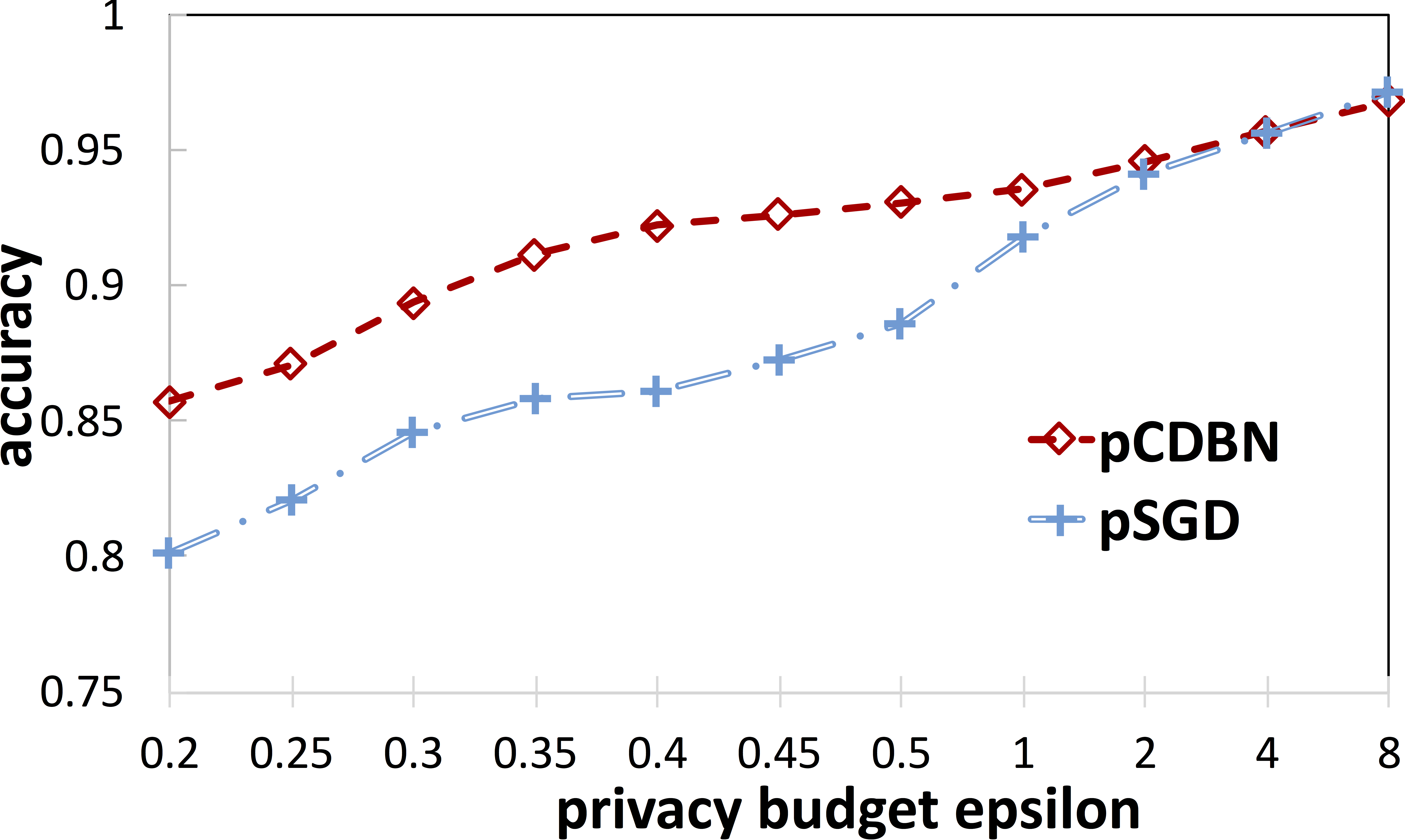} & \includegraphics[width=2.4in]{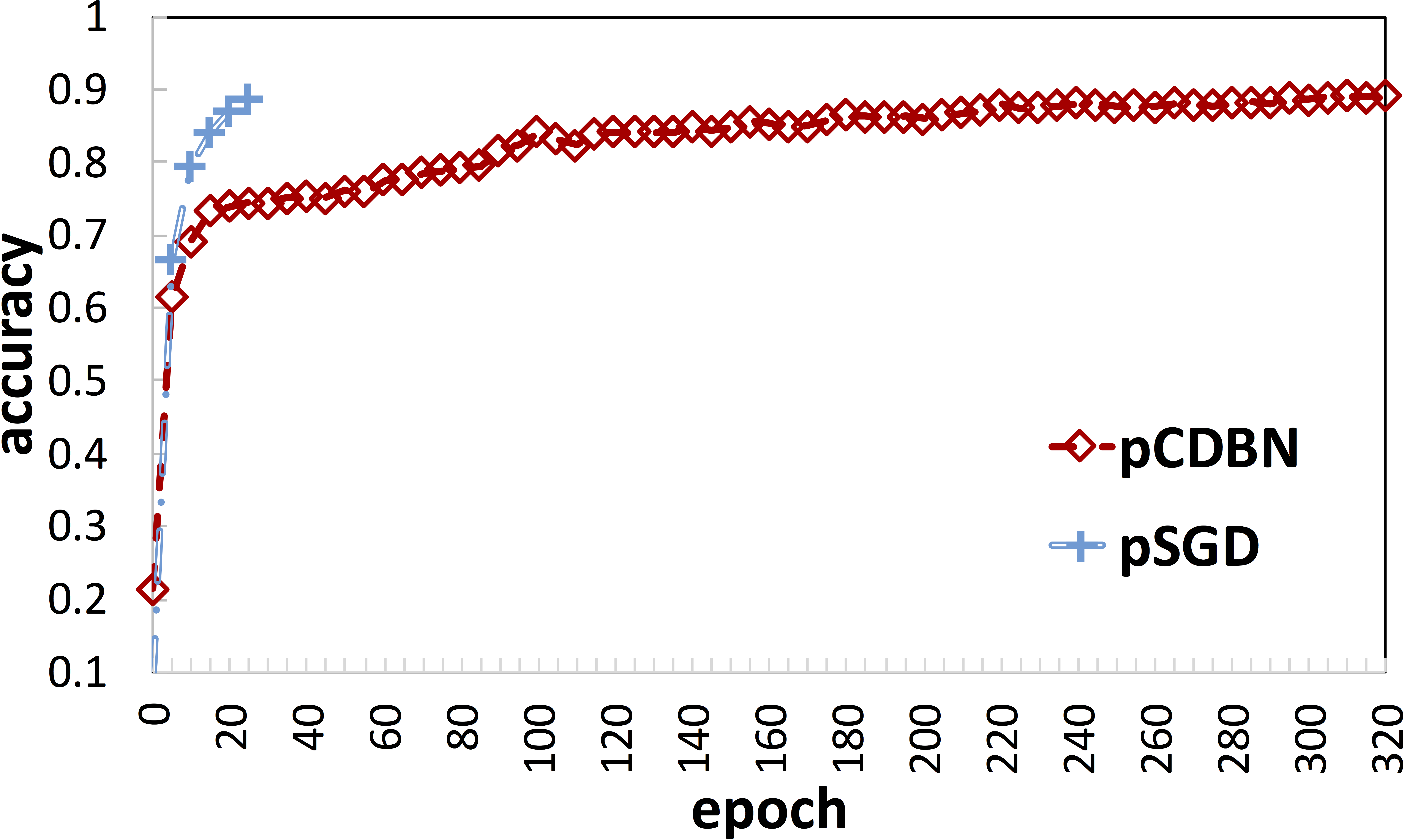} \\ [0.0cm] \mbox{(a) accuracy vs. $\epsilon$} & \mbox{(b) $\epsilon = 0.5$ (large noise)}
\end{array}$
\caption{Accuracy for different noise levels on the MNIST dataset.} 
\label{MNIST2}
\end{figure}

\begin{figure}[t]
\centering
\includegraphics[width=2.8in]{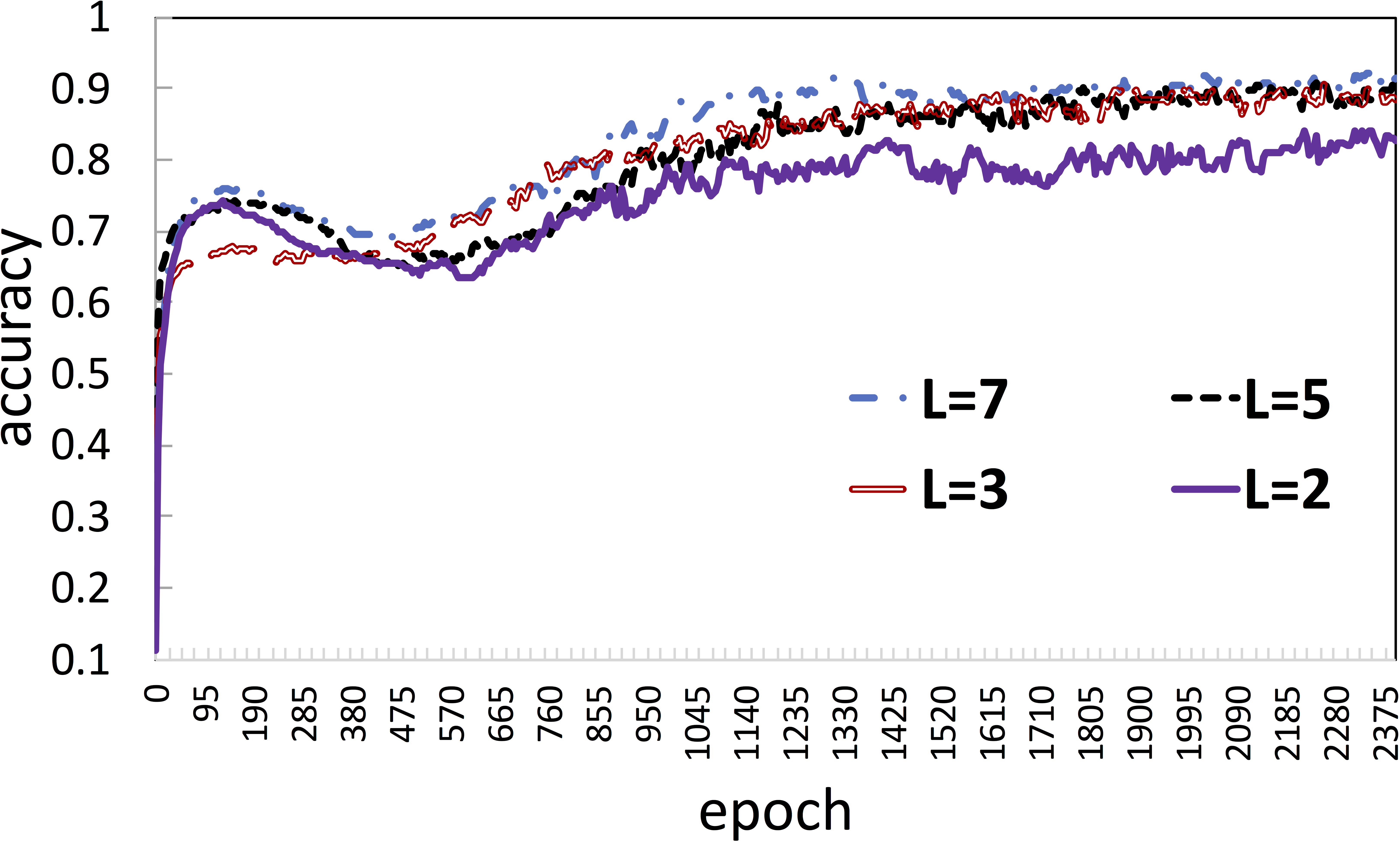}
\caption{The impact of different values of $L$ on the MNIST dataset.} 
\label{L_Impact}
\end{figure}

\subsection{Handwriting Digit Recognition}
To further demonstrate the ability to work on large-scale datasets, we conducted additional experiments on the well-known MNIST dataset \cite{Lecun726791}. The MNIST database of handwritten digits consists of 60,000 training examples, and a test set of 10,000 examples \cite{Lecun726791}. Each example is a 28 $\times$ 28 size gray-level image. The MNIST dataset is completely balanced, with 6,000 images for each category, with 10 categories in total.

We compare our model with the private stochastic gradient descent algorithm, denoted \textbf{pSGD}, from Albadi et al. \cite{Abadi}. The pSGD is the state-of-the-art algorithm in preserving differential privacy in deep learning. pSGD is an advanced version of \cite{ShokriVitaly2015}; therefore, there is no need to include the work proposed by Shokri et al. \cite{ShokriVitaly2015} in our experiments. The two approaches, i.e., our proposed algorithm and the pSGD, are built on the same structure of a convolutional deep belief network. As in prior work \cite{Abadi}, two convolution layers, one with 32 features and one with 64 features, and each hidden neuron which connects with a 5x5 unit patch are applied. On top of the convolution layers, there are a fully-connected layer with 25 units, and a softmax of 10 classes (corresponding to the 10 digits) with cross-entropy loss. 

$\bullet$ Fig. \ref{MNIST2}a illustrates the prediction accuracies of each algorithm as a function of the privacy budget $\epsilon$. We can see that our model pCDBN outperforms the pSGD in terms of prediction accuracies with small values of the privacy budget $\epsilon$, i.e., $\epsilon \leq 1.0$. This is a substantial result, since smaller values of $\epsilon$ enforce a stronger privacy guarantee of the model. With higher values of $\epsilon$ ($ > 1.0$), i.e., small injected noise, the two models converge to similar prediction accuracies.

$\bullet$ Fig. \ref{MNIST2}b demonstrates the benefit of being independent of the number of training epochs in consuming the privacy budget of our mechanism. In this experiment, $\epsilon$ is set to 0.5, i.e., large injected noise. The pSGD achieves higher prediction accuracies after using a small number of training epochs, i.e., 88.59\% after 25 epochs, compared with the pCDBN. More epochs cannot be used to train the pSGD, since it will violate the privacy protection guarantee. Meanwhile, our model, the pCDBN, can be trained with an unlimited number of epochs. After a certain number of training epochs, i.e., 2,400 epochs, the pCDBN outperforms the pSGD in terms of prediction accuracy, with 93.08\% compared with 88.59\%. 

Our experimental results clearly show the ability to work with large-scale datasets using our mechanism. In addition, it is significant to be independent of the number of training epochs in consuming privacy budget $\epsilon$. Our mechanism is the first of its kind offering this distinctive ability. 

$\bullet$ \textbf{The Impact of Polynomial Degree $L$}. Fig. \ref{L_Impact} shows the prediction accuracies of our model by using different values of $L$ on the MNIST dataset \cite{Lecun726791}. After a certain number of training epochs, it is clear that the impact of $L$ is not significant when $L$ is larger than or equal to 3. In fact, the models with $L \geq 3$ converge to similar prediction accuracies after 2,400 training epochs. The difference is notable with small numbers of training epochs. With $L$ larger than 7, the prediction accuracies are very much the same. Therefore we did not show them in Fig. \ref{L_Impact}. Our observation can be used as a gold standard in selecting $L$ when approximating energy functions based on Chebyshev polynomials.

$\bullet$ \textbf{Computational Performance}. Given the MNIST dataset, it takes an average of 1,476 seconds to train our model, after 2,400 epochs, by using a GPU (NVIDIA GTX TITAN X, 12 GB RAM with 3,072 CUDA cores). Meanwhile, training the pSGD is faster than our model, since only a small number of training epochs is needed to train the pSGD. On average, training the pSGD takes 122 seconds, after 25 training epochs. For the YesiWell dataset, training our pCDBN model takes an average of 2,910 seconds, after 600 epochs, compared with 2,141 seconds of the dPAH model.

\section{Conclusions and Discussions}
In this paper, we propose a novel framework for developing convolutional deep belief networks under differential privacy. Our approach conducts both sensitivity analysis and noise insertion on the energy-based objective functions. Distinctive characteristics offered by our model include: \textbf{(1)} It is totally independent of the number of training epochs in consuming privacy budget; \textbf{(2)} It has the potential to be applied in typical energy-based deep neural networks; \textbf{(3)} Non-linear activation functions, which are continuously differentiable (Stone-Weierstrass Theorem \cite{WalterRudin}) and satisfy the Riemann-integrable condition, e.g., tanh, arctan, sigmoid, softsign, sinusoid, sinc, Gaussian, etc. \cite{Activation}, can be applied; and \textbf{(4)} It has the ability to work with large-scale datasets. 
With these fundamental abilities, our framework could significantly improve the applicability of differential privacy preservation in deep learning. To illustrate the effectiveness of our framework, we propose a novel model based on our private convolutional deep belief network (pCDBN), for human behavior modeling. Experimental evaluations on a health social network, YesiWell data, and a handwriting digit dataset, MNIST data, validate our theoretical results and the effectiveness of our approach. 

In future work, it is worthwhile to study how we might be able to extract private information from deep neural networks. We will also examine potential approaches to preserve differential privacy in more complex deep learning models, such as Long Short-Term Memory (LSTM) \cite{DBLP:HochreiterS97}. Another open direction is how to adapt our framework to multiparty computational settings, in which multiple parties can jointly train a deep learning model under differential privacy. Innovative multiparty computational protocols for deep learning under differential privacy must have the ability to work with large-scale datasets. 

In principle, our mechanism can be applied on rectified linear units (ReLUs) \cite{glorot2011deep} and on parametric rectified linear units (PReLUs) \cite{DBLP:HeZR015}. The main difference is that we do not need to approximate the energy function. This is because the energy function is a polynomial function when applying ReLU units. However, we need to add a local response normalization (LRN) layer \cite{krizhevsky2012imagenet} to bound the values of hidden neurons. This is a common step when dealing with ReLU units. The implementation of this layer and ReLU units under differential privacy is an exciting opportunity for other researchers in future work. 

Another challenging problem is identifying the exact risk of re-identification/re-construction of the data under differential privacy. In \cite{DBLP:conf/kdd/LeeC12}, the authors proposed differential identifiability to link individual identifiability to $\epsilon$ differential privacy. However, this is still a non-trivial question. A fancy solution is to design innovative approaches to reconstruct original models from noisy deep neural networks. Then, one could use the original models to infer sensitive information in the training data. However, how to reconstruct the original models from differentially private deep neural networks is an open question. Of course, it is very challenging and will require a significant effort of the whole community to answer.

\begin{acknowledgements}
This work is supported by the NIH grant R01GM103309 to the SMASH project. Wu is also supported by NSF grant 1502273 and 1523115. Dou is also supported by NSF grant 1118050. We thank Xiao Xiao and Rebeca Sacks for their contributions. 
\end{acknowledgements}

\bibliographystyle{spbasic}
\bibliography{dou,prop,thesis,paea,all,related,sigproc,bib-xintao,sigproc2}

\appendix

\section{Corrections of the Paper} 
There was a mistake in terms of model configurations for MNIST data reported in our original version. The number of hidden neurons and epochs are updated in this correction version. Moreover, the code release\footnote{\url{https://github.com/tensorflow/models/tree/master/research/differential_privacy}} of the differentially private Stochastic Gradient Descent algorithm (\textbf{pSGD}) \cite{Abadi} is used in this version. The pSGD algorithm is significantly improved in terms of accuracy, and the computation of the privacy budget is also more accurate than our pSGD implementation used in our original version. The experimental results of our algorithms and the pSGD are updated accordingly for the MNIST data.


\end{document}